\documentclass{article}
\PassOptionsToPackage{numbers, compress}{natbib}
\usepackage[preprint,main]{neurips_2026}

\usepackage[utf8]{inputenc} % allow utf-8 input
\usepackage[T1]{fontenc}    % use 8-bit T1 fonts
\usepackage{nicefrac}       % compact symbols for 1/2, etc.
\usepackage{microtype}      % microtypography
\usepackage{xcolor}         % colors
\usepackage{amsmath}
\usepackage{amssymb}
\usepackage{mathtools}
\usepackage{amsthm}
\usepackage{algorithm}
\usepackage{algorithmic}
\usepackage{tikz}
\usepackage{enumitem}
\usepackage{url}    
\usepackage{booktabs}
\usepackage{amsfonts}
\usepackage{subcaption}
\usepackage{multirow}
\usepackage{wrapfig}
\usepackage{hyperref}                        
\usepackage[capitalize,noabbrev]{cleveref}  % after hyperref  

\newtheorem{definition}{Definition}
\newtheorem{theorem}{Theorem}
\newtheorem{proposition}{Proposition}
\newtheorem{corollary}{Corollary}

\newcommand{\Gach}{\mathcal{G}}
\newcommand{\Lib}{\mathcal{L}}
\newcommand{\Cand}{\Sigma}
\newcommand{\cost}{\mathcal{C}}
\newcommand{\score}{\mathcal{F}}

\newcommand{\ind}{\mathbb{I}}

\definecolor{figorange}{HTML}{F09828}
\definecolor{figblue}{HTML}{4FA8D8}
\definecolor{figpurple}{HTML}{803088}

\crefname{section}{Sec.}{Secs.}
\crefname{equation}{Eq.}{Eqs.}
\crefname{figure}{Fig.}{Figs.}
\crefname{table}{Tab.}{Tabs.}

\DeclareMathOperator{\repemp}{RepEmp}
\DeclareMathOperator{\envemp}{EnvEmp}

\usepackage[most]{tcolorbox}
\newtcolorbox{promptbox}[1][]{
  colback=gray!8,
  colframe=gray!50,
  boxrule=0.5pt,
  arc=2pt,
  fonttitle=\footnotesize\bfseries\sffamily,
  coltitle=black,
  title=#1,
  fontupper=\footnotesize\ttfamily,
  breakable,
  enhanced,
  attach boxed title to top left={xshift=8pt, yshift=-8pt},
  boxed title style={colback=white, colframe=gray!50, arc=2pt, boxrule=0.5pt}
}
\newcommand{\promptvar}[1]{\textcolor{purple!70!black}{\{\{#1\}\}}}

\title{What Is Worth Representing? Representational Empowerment for Continual Model Construction}

\author{
  Fei Dai\thanks{These authors contributed equally to this work.} \\
  University of California, Berkeley\\
  \texttt{f\_d@berkeley.edu} \\
  \And
  Hanqi Zhou\footnotemark[1] \\
  University of Tübingen, TU Darmstadt \\
  \texttt{hanqi.zhou@uni-tuebingen.de} \\
  \And 
  Alison Gopnik\\
  University of California, Berkeley\\
  \texttt{gopnik@berkeley.edu} \\
  \And
  Charley Wu\\
  TU Darmstadt\\
  \texttt{charley.wu@tu-darmstadt.de} \\
}

\begin{document}
\maketitle

\begin{abstract}
The first problem of modeling the world is not just estimating the right parameters or causal structure, but deciding \emph{what should be represented at all}. 
We frame this problem as \emph{continual model construction}: an agent maintains an environment-specific model $M$ of an inaccessible world $W$ and curates a persistent library $\Lib$ of reusable representational elements across environments. 
We propose \emph{Representational Empowerment} ($\repemp$) to score candidate elements by how much they expand the agent's future capacity to model and plan, complementing the classic definition of empowerment, but redefined as control over internal representations instead of external states. 
We realize the framework as a hierarchical Curator-Actor architecture and test it across three experiments. 
In a closed-vocabulary causal-learning task, human participants construct causal models at varying abstraction granularities to maximize goal reachability rather than fidelity to the world---a signature better predicted by $\repemp$ over information-gain alternatives. 
Matched simulations reveal $\repemp$-guided construction contributes more than exploration to sufficient structure recovery and cross-task transfer.
Finally, in an open-vocabulary planning domain, an LLM-augmented Curator builds more compact symbolic libraries, which also generalize better than baselines. Ablating $\repemp$ eliminates these benefits. 
Together, these results identify $\repemp$ as a key principle for continual model construction: deciding what to build, retain, and reuse under bounded resources. 
\end{abstract}

\section{Introduction}
% intuitive example to motivate the story
There does not exist a complete, ready-made vocabulary for modeling our observations in the world.
Concepts such as force, mass, and acceleration in physics were \emph{constructed} to explain, predict, and intervene in the world. 
The resulting vocabulary (e.g., Newton's laws), is not the world itself, but a compressed representation sufficient for prediction and control. 
That vocabulary is also \emph{curated} over centuries~\citep{fogarty2015cultural}: Aristotelian categories gave way to Newtonian mechanics, which gave way to relativity. Each transition was driven by phenomena the old vocabulary could not support, while abstractions that continued to support reasoning in new domains persisted.

% connect to cogsci literature and propose the main question
Human problem-solving operates on similar principles at the smaller developmental scale. A child learning to cook does not build a complete chemical model of food; she constructs just enough (e.g., ``bigger food needs to cook longer'') to achieve her goals, retaining what transferred successfully and discarding what was specific to last Tuesday's dinner. 
This is a loop that cognitive science has long described: people plan with simplified, task-specific models rather than reconstructing the full complexity of the world~\citep{ho2022people,chen2026just,nagy2025analogy,wong2025modeling}, consistent with resource-rational accounts of bounded agents~\citep{lieder2020resource,gershman2015computational, mantiuk2025curiosity, zhou2026pathdependent}, and with children entertaining simplified causal hypotheses rather than full reconstructions~\citep{gopnik2012reconstructing}. 
Yet, both examples raise a question that precedes parameter estimation or structure learning: \emph{which representational elements should an agent build, retain, and reuse?}

% a brief literature review (since we move the complete literature review to appendix) 
Existing work addresses important parts of this problem without making model construction itself the main object of optimization. Continual learning seeks to preserve knowledge across tasks~\citep{kirkpatrick2017overcoming,serra2018overcoming,rolnick2019experience}; world-model and causal learning estimate structure within a supplied state or variable space~\citep{hafner2025mastering,ha2018recurrent,bramley2017formalizing,ke2021systematic}; and LLM-based agents leverage symbolic knowledge with little pressure to select what is kept~\citep{tang2024worldcoder,wang2023voyager}. 
These approaches motivate but do not yet provide a general criterion for deciding which candidate representational elements should be retained under resource limitations (\cref{app:related}).

\begin{figure}[t]
    \centering
    \includegraphics[width=\textwidth]{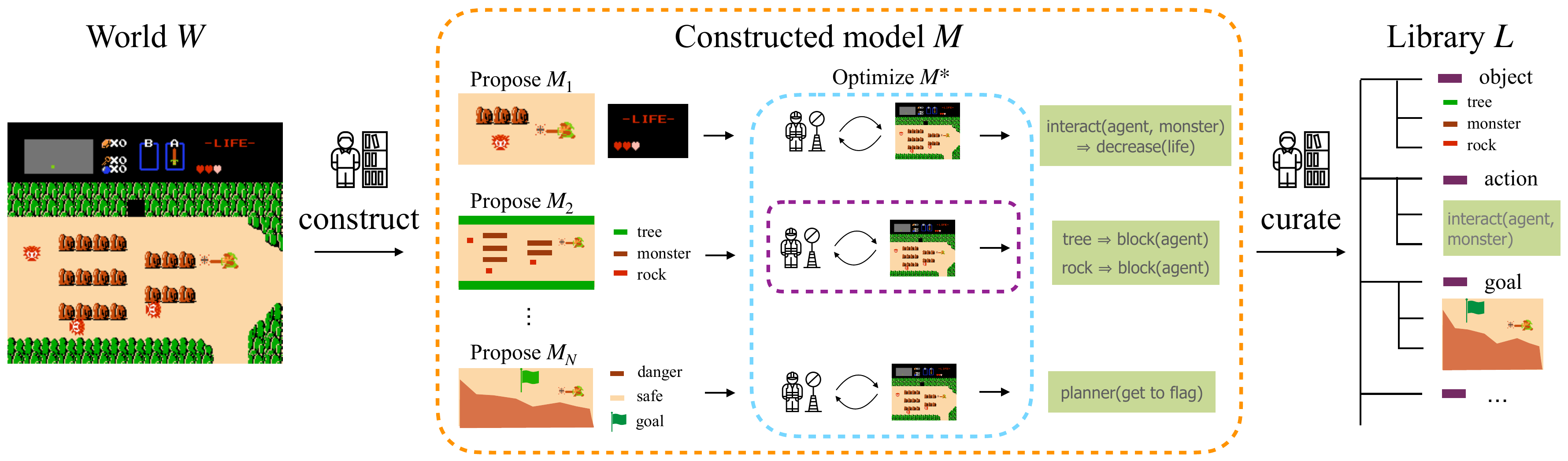}
    \caption{\textbf{Continual model construction as a $W \rightarrow M \rightarrow \Lib$ pipeline. }
    The agent is unable to recover the full world model $W$; instead it \textcolor{figorange}{constructs candidate models $M_1,\dots,M_N$}, optimizing each against goals and curating reusable elements into a persistent library $\Lib$. Existing paradigms operate inside this loop without choosing $M$:  \mbox{\textcolor{figpurple}{world-model learning}} optimizes a single monolithic $M$ to all observation data, and the \mbox{\textcolor{figblue}{continual and meta-learning}} reuses $M$ across environments. Our framework treats the choice of $M$ and the curation of $\Lib$ as the primary objects of optimization. }
    \vspace{-1.5em}
    \label{fig:pipeline}
\end{figure}

% mention our proposal intuitively; should leave the math to sec2 or sec3
Answering this question requires separating three objects that formal treatments often conflate (\cref{fig:pipeline}): an environment's inaccessible generative process $W$, the task model $M$ constructed from finite interaction, and a persistent library $\Lib$ of reusable elements.
The resulting $W\to M\to\Lib$ pipeline makes two decisions explicit: what should enter the current representation, and what should survive beyond it.
Within an environment, \emph{construction} changes $M$ so that the agent can model and plan for its current goals; across environments, \emph{curation} determines which elements remain available in $\Lib$ for future tasks. 
To guide both decisions, we introduce \emph{representational empowerment} ($\repemp$) \citep{zhou2025agent} as an information-theoretic measure of how new representations expand the agent’s future capacity to model and plan.
\Cref{sec:setup} formalizes this criterion, and \cref{sec:method} shows how a Curator-Actor architecture evaluates them directly in enumerable settings and approximates them when representations must be invented online.

\textbf{Contributions.}
(1) We formulate continual model construction as a $W \to M \to \Lib$ pipeline where agents construct environment-specific models and curate reusable elements across tasks (\cref{sec:setup}).
(2) We introduce \emph{representational empowerment} as a model construction criterion, and realize it in a Curator-Actor architecture that proposes, scores, and curates elements through planning and execution (\cref{sec:method}).
(3) In closed-vocabulary settings, where the candidate elements form a fixed set of representation granularities (\cref{sec:closed-vocab}), human choices are best explained by empowerment; matched simulations then show that $\repemp$-based representation selection causally improves cross-task transfer (\cref{sec:closed-vocab}).
(4) In an open vocabulary, an LLM-augmented Curator invents predicates and operators online and uses execution-grounded evidence to build compact Planning Domain Definition Language (PDDL)~\citep{aeronautiques1998pddl} libraries. Against PPO, Motif~\citep{klissarov2023motif}, and WorldCoder~\citep{tang2024worldcoder}, the curator demonstrates better cross-task transfer and library compactness.

\section{Representational Empowerment for Model Construction}\label{sec:setup}
We frame continual model construction as decisions about what enters a task model and what persists in a library. 
This section motivates our framework: 
why \emph{three} modeling objects should be considered rather than one, 
why \emph{empowerment} is the right criterion for what those objects should contain, 
and why empowerment must be measured over \emph{internal} representations.
\cref{sec:method} then turns this framework into a concrete hierarchical agent. Formal properties and proofs are deferred to \cref{app:theory}.

\textbf{Three objects in the $W \to M \to \Lib$ pipeline.}
World-model learning typically treats the environment $W$ as the object to be learned, seeking a model that captures its dynamics from experience~\citep{hafner2025mastering}.
For a resource-bounded agent, however, representing all of $W$ is neither feasible nor necessary. It must construct a limited task model $M$ and decide which elements to carry forward in a reusable library $\Lib$~\citep{wong2025modeling,nagy2025analogy}.
Concretely, the agent encounters environments $e_1,e_2,\ldots\sim\mathcal E$, each generated by an inaccessible process $W_e$. It assembles a task model $M_e$ from candidate elements $\sigma\in\Cand$, where an element may be an observation granularity, causal relation, predicate, or operator schema. A persistent library $\Lib$ carries selected elements across environments. Thus, \emph{construction} modifies $M_e$ within an environment, whereas \emph{curation} modifies $\Lib$ across environments. 

\begin{wrapfigure}{r}{0.44\textwidth}
    \vspace{-1em}
    \centering
    \includegraphics[width=0.9\linewidth]{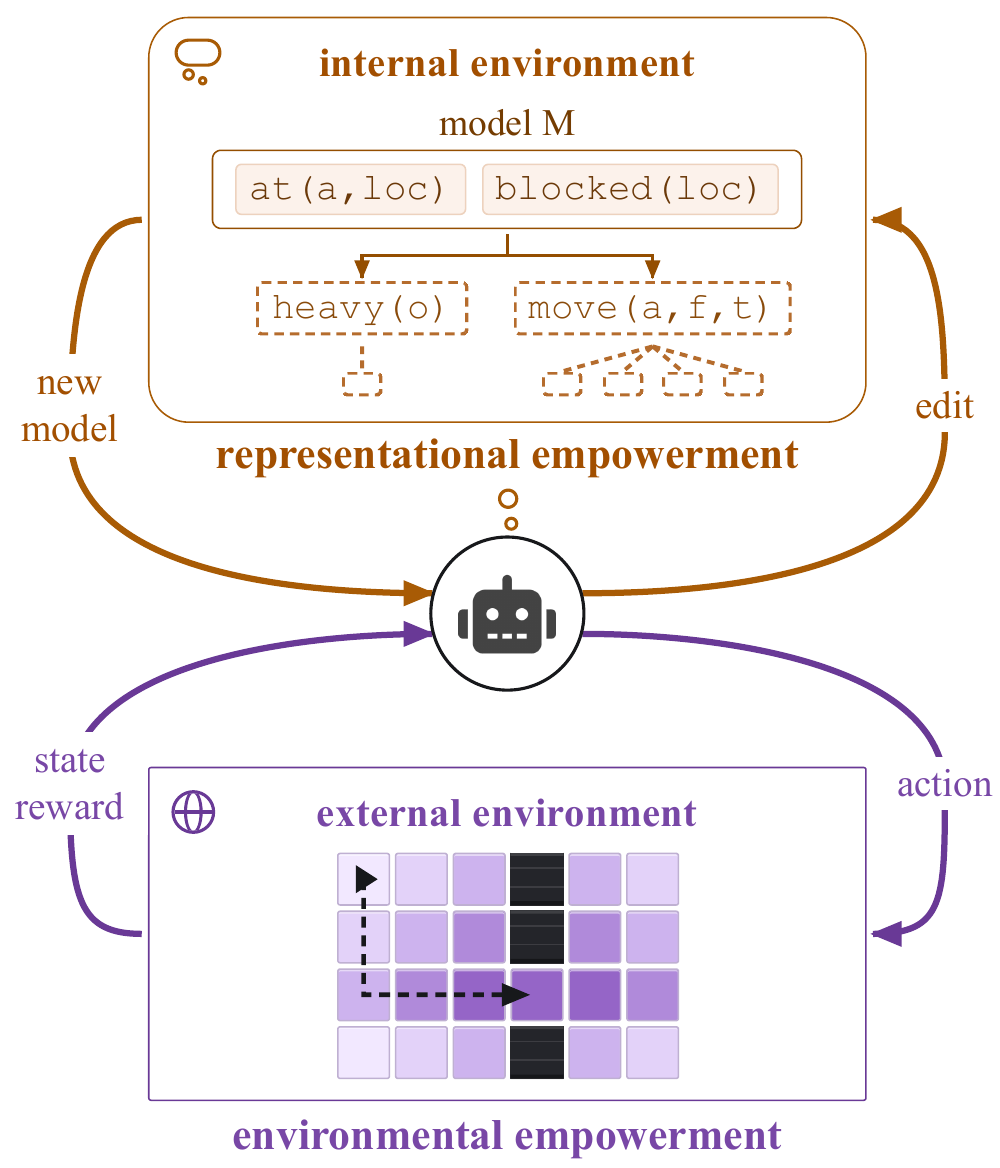}
    \caption{\textbf{Two forms of empowerment.} Representational empowerment ($\repemp$) describes how an agent edits its own model $M$, encouraging representations that support reaching future models (dashed boxes).
    Environmental empowerment ($\envemp$) is the traditional definition, describing how an agent acts on the external environment and encourages reaching future states (shading).}
    \label{fig:emp-schematic}
    \vspace{-1em}
\end{wrapfigure}

\textbf{What to represent?}
Once $M_e$ and $\Lib$ are themselves objects of choice, the central problem is how to evaluate a candidate representational change: which changes expand the agent's modeling capacity and which merely add irrelevant detail?
\emph{Information gain}~\citep{lindley1956measure,loewenstein1994psychology, nelson2005finding} rewards uncertainty reduction about $W_e$, whether or not the resolved distinction changes what the agent can represent or use.
\emph{Environmental empowerment}~\citep{klyubin2005empowerment,salge2014empowerment,modirshanechi2025testing,lidayan2025intrinsically} instead measures controllable future diversity of states within $e$: it treats the environment as an information channel from action sequences to future states, whose capacity $\envemp_T(s;e)=\sup_{p(a_{1:T})}\,I(A_{1:T};S_T \mid S_0=s,e)$ is high when many states are reachable from $s$ and the agent can reliably steer to a chosen target.
This provides the right structural idea about controllable future diversity, but is mapped to external states, leaving the agent's representation fixed (see \cref{app:alternatives} for a detailed contrast).

\emph{Representational empowerment} instead makes representational change the channel itself: edits to the agent's representation are the inputs and future representations are the outputs~\citep{zhou2025agent}.
Let $Z_t=(M_t,\Lib_t)$ denote the agent's representational state and $\Omega$ a finite alphabet of edits to that state (e.g., \textsc{add}, \textsc{remove}, or \textsc{refine}; \cref{app:rep-channel}).
Over a horizon $T$, an edit sequence $U\in\Omega^T$ induces through $K_T$ a distribution over future representational states $Z_T$.
The capacity of this channel defines representational empowerment:
\begin{equation}\label{eq:exact-repemp}
   \repemp = \mathcal{R}_T(z)
   \;=\;\sup_{\rho\in\Delta(\Omega^T)}\,
   I_{\rho K_T}\!\big(U;\,Z_T \mid Z_0=z\big).
\end{equation}
Thus, $\repemp$ is high when edits reliably reach diverse future representations. 
Eq.~\ref{eq:exact-repemp} defines the exact criterion, though evaluating it empirically requires an observable way to compare the internal representations it distinguishes; we address this next by projecting the channel onto goal achievement.

\textbf{Grounding the criterion in the environment.}
Exact representational empowerment distinguishes future internal states $Z_T$, including representations that differ syntactically but support the same behavior.
For empirical evaluation, we observe a task model through its goal-achievement signature $\Gamma_e(M)=(\,\ind[\,g\text{ is achieved by planning with }M\text{ and executing in }e\,] \,)_{g\in\Gach_e}$.
Applying $\Gamma_e$ to the channel's outputs grounds it in goal outcomes in $e$:
\begin{equation}\label{eq:goal-proj-repemp}
   \mathcal{R}^{\Gamma}_T(z;e)
   =\sup_{\rho\in\Delta(\Omega^T)}
   I_{\rho K_T}\!\big(U;\Gamma_e(Z_T)\mid Z_0=z\big).
\end{equation}
This projection is a measurement of representational empowerment rather than an alternative definition: it preserves exactly those distinctions in $Z_T$ that are visible through goal achievement.
When the projected channel is enumerable, we evaluate it directly; otherwise, \cref{sec:method} approximates its relevant consequences using finite probes and execution feedback.
Its formal relation to exact representational empowerment is given in \cref{app:theory}.

\section{The Curator-Actor Architecture}\label{sec:method}

Continual model construction operates on two timescales: representational commitments persist across environments, whereas interactions with the environment unfold within a single task. 
We realize the agent with a hierarchical architecture (\cref{fig:experiment-overview}a): a slow, high-level \emph{Curator} $\mathcal C$ decides what to build and retain (i.e., editing the task model $M_t$ and the library $\Lib$), while a fast, low-level \emph{Actor} $\mathcal A$ plans and acts under the model supplied by the Curator. 
Each level of the architecture only depends on what it needs. The Actor depends only on the current model, not the history of edits. The Curator depends on execution outcomes, not the actions that achieved them.
We present the architecture at the level of detail the experiments require, and give its full specification in \cref{app:theory,app:model-construction} with the loop itself in \cref{alg:cmc}.

\begin{figure}[t]
    \centering
    \includegraphics[width=.98\textwidth]{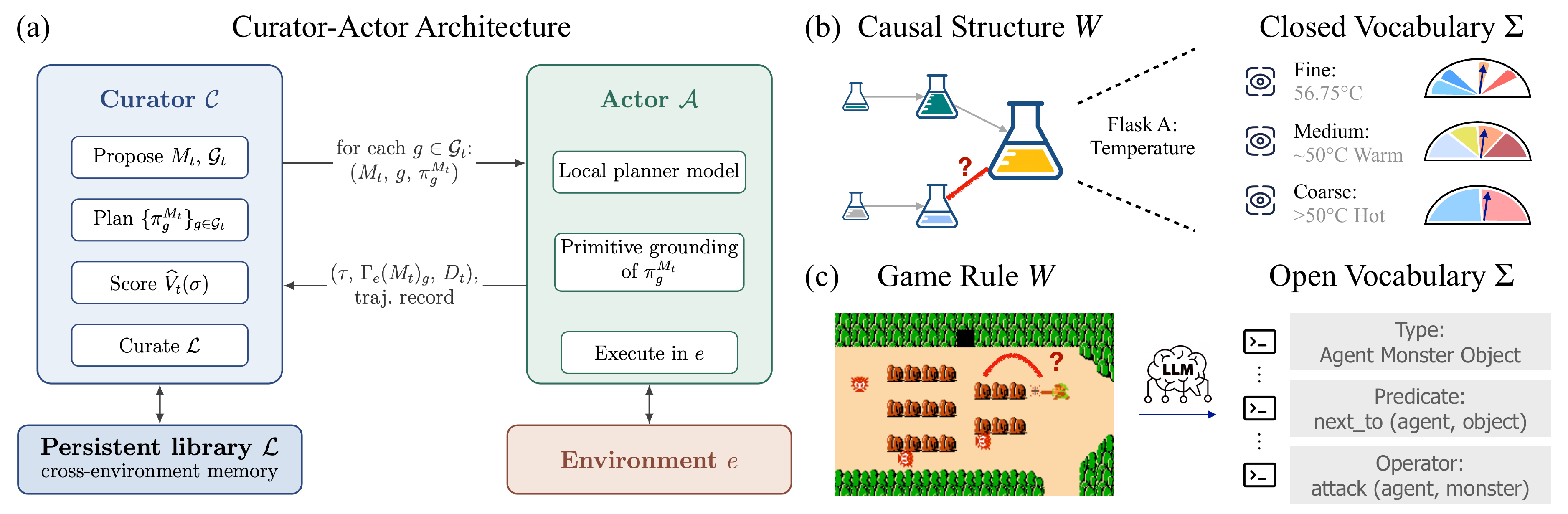}
    \vspace{-.5em}
    \caption{\textbf{Curator-Actor architecture across experiments.}
    \textbf{(a)} The Curator proposes and scores model edits; the Actor grounds and executes their plans, returning execution evidence that determines what is retained in the persistent library.
    \textbf{(b)} Closed vocabulary (Exp.~1-2): $\Cand$ is a fixed set of model granularities/variables/properties.
    \textbf{(c)} Open vocabulary (Exp.~3): $\Cand$ is an LLM-proposed space of PDDL types, predicates, and operators; curated elements enter $\Lib$ and seed $M$ on the next task.}
    \label{fig:experiment-overview}
    \vspace{-1.5em}
\end{figure}

\textbf{Propose and probe (Curator).}
At iteration $t$, the Curator receives the library $\Lib$, observations from environment $e$, and a trajectory buffer $\mathcal D_t$. It assembles $M_t$ from relevant elements of $\Lib$ and new candidate elements from $\Cand$, and constructs a probe-goal set $\mathcal G_t$ that puts those candidates to the test: 
\emph{positive probes} test goals a candidate could newly enable (gained capabilities), whereas \emph{regression probes} test goals the previous model already supported (preserved capabilities). 
The Curator then plans for each probe goal under $M_t$ and hands the plans to the Actor.

\textbf{Execute (Actor).}
For each probe goal $g$, the Actor receives the model, the goal, and its plan $(M_t,g,\pi_g^{M_t})$. Because the plan is expressed in the abstract operators of $M_t$, the Actor grounds it into the primitive actions available in $e$ and executes it. 
It reports back the realized trajectory, whether the goal was achieved ($\Gamma_e(M_t)_g$), and any operator whose predicted effects failed to hold. 
Across $\mathcal G_t$ these outcomes give the goal-achievement signature $\Gamma_e(M_t)$, the execution evidence that separates a merely plausible symbolic proposal from a representation that supports successful behavior.

\textbf{Score and promote (Curator).}
With the probe outcomes in hand, the Curator evaluates each candidate by its effect on the grounded channel. When the candidates and goal outcomes can be enumerated (Exps.~1-2), the score is computed exactly. In the open-vocabulary setting (Exp.~3), neither the candidate space nor the space of relevant goals can be enumerated. Only there, we use the finite-sample approximation
\begin{equation}\label{eq:expected-action-value}
   \widehat V_t(\sigma)
   =q_t(\sigma)
   \sum_{g\in W_t(\sigma)}
   \left[\widehat p_t(g\mid M_t\cup\{\sigma\})
      -\widehat p_t(g\mid M_t)\right]
   -\lambda c_\sigma,
\end{equation}
where $q_t(\sigma)$ estimates the validity of candidate $\sigma$, the witness set $W_t(\sigma)$ contains sampled goals that are currently blocked but could be unlocked by $\sigma$, $\widehat p_t$ is the empirical achievement rate returned by the Actor, and $\lambda>0$ weights the representational cost $c_\sigma$. 
The Curator accepts $\sigma$ into $M_t$ when $\widehat V_t(\sigma)>0$ and its probe successes are reliable, and otherwise retains the previous model. This score approximates \cref{eq:goal-proj-repemp}, whose capacity comes entirely from edits that change goal outcomes; $\widehat V_t$ measures those changes directly on sampled probes.

The three steps above adapt $M_t$ within a single environment, while curation operates at the slower timescale of the environment sequence $e_1,e_2,\ldots$ (\cref{sec:setup}). Before moving on to the next environment, the Curator re-evaluates each retained element conditional on the rest of the library and the accumulated evidence. Removing elements that are unreliable, redundant, or narrowly task-specific compresses the library, and the resulting $\Lib$ seeds model construction in the next environment.

\textbf{Experimental regimes.}
The framework makes three claims that require different kinds of evidence:  (i) $\repemp$ \emph{describes} how natural agents construct models, (ii) optimizing $\repemp$ \emph{improves} performance, and (iii) $\repemp$ \emph{scales} to settings where the vocabulary itself must be invented. 
We therefore instantiate the Curator-Actor architecture across two regimes (\cref{fig:experiment-overview}b-c). 
In the \emph{closed-vocabulary} regime, the vocabulary is a fixed menu of representation granularities, so every criterion for selecting representational elements ($\repemp$, $\envemp$, information gain) can be scored exactly on the same alternatives. Exp.~1 provides correlational evidence by fitting these criteria to human choices, 
while Exp.~2 provides causal evidence by implementing them in matched simulated agents.
In the \emph{open-vocabulary} regime, an LLM proposes PDDL predicates and operators online, so models must be invented rather than chosen from a menu and $\repemp$ is only available through the finite-sample score of \cref{eq:expected-action-value}. Exp.~3 tests whether empowerment-guided curation survives such recursive dynamics and whether it builds a compact, transferable library.
The experiments thus progress from human alignment, to causal mechanism, to open-vocabulary realization.

\section{Closed-Vocabulary Causal Learning}\label{sec:closed-vocab}

With the framework and architecture in place, we test two implications: whether $\repemp$ predicts human representation choices, and whether optimizing $\repemp$ improves structure learning and generalization. 
In a closed-vocabulary setting, the candidate representations are finite and enumerable, so we can compute $\repemp$, $\envemp$, information gain exactly to compare them.
Specifically, we use a causal-discovery environment in which $\Cand$ is a fixed set of representation granularities. 
Exp.~1 (\cref{sec:human}) tests the framework's predictions against human representation and intervention choices. 
Exp.~2 (\cref{sec:sim}) implements each criterion (representation choice and intervention choice) with simulated agents and tests whether $\repemp$ captures optimal granularity under different conditions.

\textbf{Causal-discovery environment. }
Both experiments use a multivariable causal-discovery environment adapted from \citet{ke2021systematic}, presented as an alchemy task. The world $W$ is a structural equation model (SEM) whose latent variables are flasks, each carrying three observable properties (temperature, color, and weight). Learners never observe $W$ directly. Instead, they complete a sequence of tasks, each exposing a small subset of the flasks. Within a task, the learner explores by intervening to discover the causal rules among the visible flasks. Each step comprises two choices.
First, the learner selects an observation granularity $\ell\in\{\texttt{coarse},\texttt{medium},\texttt{fine}\}$. 
Second, the learner intervenes on a variable-property pair $(v,r)$, such as setting flask A's temperature to a chosen value. The intervention propagates through $W$, and its consequences are reported at the selected granularity. At the end of each task, the learner reports a causal graph and uses it in goal-directed tests, where each goal asks for a specific property to be brought to a target value (e.g., ``set flask B's weight to level 3''). 
Here $M_t$ is the causal model constructed from observations at $\ell$ (different granularities), and $\Lib$ contains structure retained across task windows. Finer observations distinguish more values but are noisier, so the granularity that best identifies goal-relevant structure need not be the finest (\cref{app:model-selection-env}).

\subsection{Experiment 1: Human causal learning}\label{sec:human}

\begin{figure}[t]
\centering
    \includegraphics[width=\textwidth]{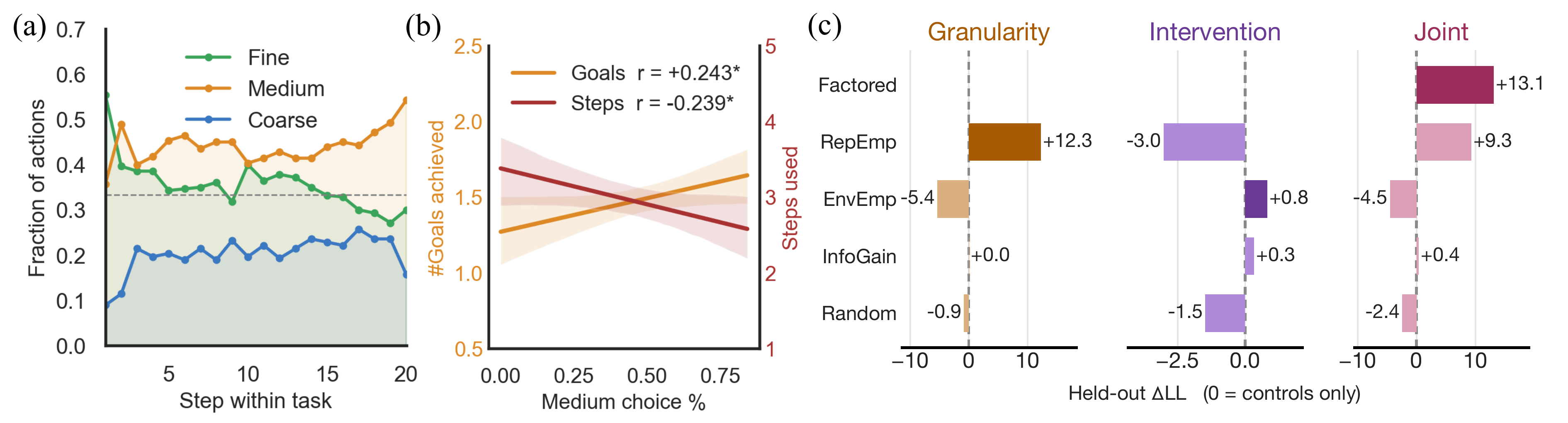}
    \caption{\textbf{Exp.~1 results.}
    \textbf{(a)} Fraction of actions at each observation granularity as a function of step within a task, averaged across participants. 
    \textbf{(b)} Across participants, greater use of \texttt{medium} was associated with more goals achieved and fewer steps to achieving a solution in the generalization environments. 
    \textbf{(c)} Held-out $\Delta$LL of each selection criterion (including random effects and control variables) for granularity choice, intervention choice, and both jointly. Zero is the control-only model and positive values indicate better performance.
    }
    \label{fig:human_results}
\vspace{-1.5em}
\end{figure}

Participants ($n{=}70$; see \cref{app:human}) completed four such tasks, choosing an observation granularity $\ell$ and a flask-property intervention $(v,r)$ at each step. 
We ask whether (A)~participant behavior is consistent with resource-rational model construction, (B)~whether this pays off in goal achievement, and (C)~if $\repemp$ provides the best account of human choices about the granularity of observations. 

\textbf{(A) Human choices are consistent with resource-rational model construction.}
Participants chose \texttt{medium} granularity on $43\%$ of decisions, compared with $38\%$ for \texttt{fine} and $19\%$ for \texttt{coarse} (\cref{fig:human_results}a). $\repemp$ also prefers \texttt{medium} granularity under this setting, since it provides sufficient resolution for achieving the probe goals without adding too much noise (see Exp.~2 for settings where $\repemp$ predicts other granularities).
Across tasks, choices converged on \texttt{medium} from both extremes, with \texttt{medium} rising from $35\%$ to $46\%$ ($t(69){=}2.83$, $p{=}.006$) and \texttt{coarse} \emph{falling} from $25\%$ to $17\%$ ($t(69){=}{-}2.86$, $p{=}.006$). This rules out a cost-only account, which predicts participants simply minimize representational cost (see \cref{app:human-behavior} for within-task dynamics). Cost minimization would consistently favor \texttt{coarse}, whereas participants moved away from it instead, progressively favoring the task-sufficient \texttt{medium} granularity (\cref{app:prop:nonstationary}). 
A second alternative is that \texttt{medium} is preferred because it is easier to process, with goal sufficiency playing no role. An effort account would predict that granularities differing in processing demand would differ in reaction time (RT), but median RTs were largely equivalent at $7.6$/$7.6$/$7.3$\,s for \texttt{coarse}/\texttt{medium}/\texttt{fine}, respectively, with the \texttt{fine}-\texttt{medium} contrast on log-RT returning a null result ($\beta{=}{-}0.003$, $p{=}.94$; \cref{app:human-behavior}).

\textbf{(B) Goal-sufficient compression covaries with performance.}
Greater \texttt{medium} use predicted more goals achieved ($r{=}.24$, $t(68){=}2.07$, $p{=}.042$) and in fewer steps ($r{=}{-}.24$, $p{=}.046$; \cref{fig:human_results}b), while greater \texttt{fine} use predicted the opposite ($r{=}{-}.29$, $p{=}.015$; $r{=}.31$, $p{=}.010$), and \texttt{coarse} use predicted neither ($p{>}.26$). 
A logistic mixed model over the $560$ binary goal outcomes, with a participant random intercept, confirmed that participants' overall \texttt{medium} use (between-participant component) was positively associated with goal achievement ($\beta{=}1.29$, $z{=}2.07$, $p{=}.039$). The task-specific deviation from each participant's own average (the within-participant component) was positive but unreliable ($\beta{=}.59$, $p{=}.31$). We therefore interpret this as an association rather than a causal effect; Exp.~2 provides the corresponding manipulation (\cref{app:human-methods}).

\textbf{(C) $\repemp$ best explains granularity choices.}
To compare various choice criteria, we reconstructed each participant's belief state online and scored all actions on the same observed trajectory. Utilities were evaluated using leave-one-task-out predictions in an hierarchical regression controlling for population and participant random effects, and with previous-choice stickiness, and intervention familiarity as control variables (\cref{app:human-action}). 
\Cref{fig:human_results}c reports $\Delta$LL as model improvement over a baseline containing only the control variables.
$\repemp$ provided the best predictions of granularity, while $\envemp$ provided the best predictions of intervention source. 
However, combining both $\repemp$ and $\envemp$ yields the largest joint increment across all pairings, with almost all of the factorized improvement in predictions coming from $\repemp$. 

\textbf{Exp.~1 summary.}
Across tasks, participants progressively compress from initially fine observations toward the goal-sufficient \texttt{medium} granularity. Greater \texttt{medium} use was associated with better goal performance. 
Controlling for individual preferences and stickiness, the factored $\repemp$-granularity $\times$ $\envemp$-intervention criteria achieved the highest joint fit, with $\repemp$ playing the largest role.
Next, Exp.~2 uses simulations to directly test how these criteria influence learning and generalization.

\begin{table}[t]
\centering
\begin{minipage}[t]{0.44\textwidth}
    \centering
    \caption{\textbf{Exp.~2: Graph recovery when \texttt{medium} is optimal.} Absolute F1 (mean, SEM; $150$ seed-task instances per split).}
    \label{tab:sim-headline}
    \scriptsize
    \vspace{8pt}
    \begin{tabular}{lcc}
        \toprule
        Condition & Train & Held-out \\
        \midrule
        Factored & $\mathbf{.720\ (.021)}$ & $\mathbf{.703\ (.021)}$ \\
        RepEmp & $.607\ (.022)$ & $.592\ (.020)$ \\
        InfoGain & $.304\ (.013)$ & $.383\ (.016)$ \\
        Random & $.307\ (.014)$ & $.365\ (.013)$ \\
        EnvEmp & $.195\ (.015)$ & $.120\ (.014)$ \\
        \bottomrule
    \end{tabular}
\end{minipage}\hfill
\begin{minipage}[t]{0.52\textwidth}
    \vspace{0pt}
    \centering
    \caption{\textbf{Exp.~2: One-factor-at-a-time ablations.} Held-out F1 (mean, SEM) when one decision is replaced and the other is held at the specified criterion. }
    \label{tab:sim-ablation}
    \scriptsize
    \begin{tabular}{llcc}
        \toprule
        Replaced & Criterion & Held-out F1 & $\Delta$ \\
        \midrule
        \multirow{3}{*}{Granularity}
          & $\repemp$ & $\mathbf{.703\ (.021)}$ & --- \\
          & InfoGain & $.425\ (.014)$ & $-.278$ \\
          & Random & $.376\ (.015)$ & $-.327$ \\
        \midrule
        \multirow{3}{*}{Intervention}
          & $\envemp$ & $\mathbf{.703\ (.021)}$ & --- \\
          & InfoGain & $.596\ (.020)$ & $-.107$ \\
          & Random & $.507\ (.021)$ & $-.196$ \\
        \bottomrule
    \end{tabular}
\end{minipage}
\vspace{-1em}
\end{table}

\subsection{Experiment 2: Mechanistic evaluation with agent-based simulations}\label{sec:sim}

We now use simulations to reproduce the causal-discovery problem with two changes the human experiment cannot make. Bayesian agents let us assign the selection criteria directly rather than inferring them from choices, while varying observation noise across granularities changes which granularity is optimal.
Each of $30$ seeds sampled a ten-variable DAG ($p_e{=}.25$) and generated a curriculum of five training and five held-out tasks. Each task exposed $4$-$7$ of the ten variables and retained about half of the preceding task's variables, so consecutive tasks shared structure and the beliefs formed in one task remained useful in the next. 
Agents were given $1{,}000$ interventions per task and carried their beliefs across task boundaries.
Each condition paired one granularity criterion with one intervention criterion ($\repemp$, $\envemp$, information gain, and random) (\cref{app:simulation}).

\textbf{(A) Representation selection drives performance.}
\Cref{tab:sim-headline} reports F1 score for different conditions. The factored condition pairs $\repemp$ granularity selection with $\envemp$ intervention selection, and achieves the best performance on both train and held-out splits.
Comparisons across criteria use the same task order for a fair comparison.
\Cref{tab:sim-ablation} clarifies the contributions of $\repemp$ and $\envemp$ by iteratively substituting each with the other criteria, using held-out F1 as a performance metric. 
Replacing $\repemp$ as the granularity criterion reduces performance close to the random floor for each alternative, whereas replacing $\envemp$ as the intervention criterion is less costly, even when interventions are chosen uniformly at random. 
Granularity selection therefore carries roughly three times the weight of intervention selection. 

\textbf{(B) Adaptation to different optimal regimes.} To test whether $\repemp$ can adapt to different optimal granularities, we manipulate observation noise separately for each granularity. $\repemp$ selects \texttt{medium} on $63\%$ of decisions when \texttt{medium} is the most reliable, and \texttt{fine} on $97\%$ of decisions when \texttt{fine} is the most reliable. 
In contrast, $\envemp$ prefers \texttt{fine} in every regime, because the state-diversity score grows with the number of bins, while InfoGain scores the three granularities roughly the same. 
Adapting to the granularity that is most reliable is therefore what separates $\repemp$ from the alternative criteria, and from fixing the granularity in advance (\cref{app:sim-identifiability}).

In the regime where \texttt{coarse} is optimal, $\repemp$ nevertheless selects \texttt{fine} on $91\%$ of decisions. With $1{,}000$ interventions, repeated \texttt{fine} observations can be averaged to reduce noise while preserving eight distinguishable value bins, compared with only two under \texttt{coarse}. When the budget falls to $20$, there are too few observations to average out this noise, and $\repemp$ instead selects \texttt{coarse} on $59\%$ of decisions (\cref{app:sim-identifiability}). 

\textbf{Exp.~2 summary.}
Exp.~2 establishes that representation selection causally drives graph recovery. Manipulating the granularity criterion changes held-out F1, which ablations attribute to granularity rather than intervention selection, while $\repemp$ adapts its granularity selection to different reliability contexts.

\section{Open-Vocabulary Symbolic Construction}\label{sec:open-vocab}

The closed-vocabulary experiments held the candidate set $\Cand$ fixed (\cref{sec:closed-vocab}). Here, we remove that assumption by allowing candidates to be generated online from interactions, expanding the question to whether the same selection criterion can build a transferable library when the vocabulary itself must be chosen. This open-vocabulary regime allows us to test the $M\to\Lib$ step of the pipeline under an unbounded candidate space.
We instantiate the Curator-Actor architecture with LLM-augmented components in two types of grid-worlds, where $M$ is a PDDL domain (i.e., typed predicates, operator schemas, or a type hierarchy) with grounding, and $\Lib$ carries promoted elements across environments.
When generating representations, we semantically define each candidate $\sigma$ as a typed predicate or operator schema rather than a representation granularity~\citep{silver2023predicate}. 
Since candidates can be invalid, the validity term $q_t$ in \cref{eq:expected-action-value} estimates whether the grounding code can fail on real observations. We also now use the cost term $\lambda c_\sigma$ to penalize retention, whereas it was previously inactive in the closed-vocabulary setting. 

We report three main findings. $\repemp$-scored curation (\textit{Curator} model) outperforms baselines on backward transfer and held-out generalization in both environments. Removing the empowerment criterion (\textit{Hoarder} model) collapses these gains, thus isolating it as a result of curation rather than proposal quality. As a result, the curator model requires substantially fewer LLM tokens than WorldCoder. 

\begin{figure}[t]
\centering
    \includegraphics[width=0.9\textwidth]{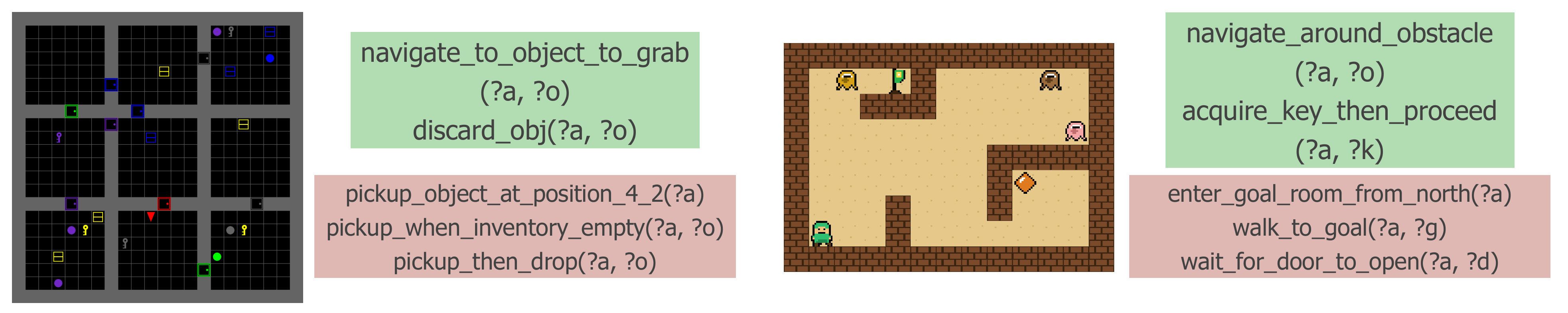}
    \vspace{-.5em}
    \caption{\textbf{Promoted vs.\ discarded operators in BabyAI (left) and Zelda (right).} Green = operators retained in $\mathcal{L}$ after passing the $\repemp$ threshold; red = candidates discarded despite being reliable on individual probes, because their witness sets are too narrow to earn persistence.}
    \label{fig:llm_example}
    \vspace{-1.5em}
\end{figure}

\subsection{Setup}\label{sec:exp-construction}

\textbf{Environments.}
We use two types of grid worlds. In \textit{BabyAI}~\citep{chevalier2018babyai}, training proceeds along a curriculum ordered by how many operators a solution requires:\texttt{GoToLocal} (1 operator), \texttt{Pickup} (3), \texttt{OpenRedDoor} (3), \texttt{UnlockLocal} (6+). Each environment introduces new operator requirements that compose those from earlier environments.  A held-out split tests compositional transfer to configurations not seen during training. 
In \textit{Zelda} we use the layout from EMPA ~\citep{tsividis2021human}, with the first three environments for training and the last two held out for testing. The environments naturally become more difficult as new kinds of monsters are introduced. Full details are in \cref{app:model-construction}.

\textbf{Evaluation.}
After training, $\Lib$ is frozen. We report \emph{forward transfer} ($\Lib$ frozen after environment $i$, evaluated on $i{+}1$), \emph{backward transfer} (the final $\Lib$ re-evaluated on each earlier training environment; \cref{app:curator-backward}), and \emph{held-out generalization} on the held-out environments.
We compare four baselines. PPO~\citep{schulman2017proximal} learns a primitive-action policy from rewards. Motif~\citep{klissarov2023motif} adds an LLM-shaped intrinsic reward but remains policy-level. WorldCoder~\citep{tang2024worldcoder} writes executable world-model code from traces and accumulates it without selective curation. The \emph{Hoarder} ablation uses the Curator's proposer, budgets, and reliability checks, but ``hoards'' each reliable proposal without being selective.
Interaction budgets are matched where applicable (\cref{app:baselines}): at matched LLM budgets, symbolic methods use about two orders of magnitude fewer environment steps than the policy baselines (\cref{app:curator-protocol}).

\subsection{Results}\label{sec:llm}

\begin{table}[t]
\centering
\scriptsize
\caption{\textbf{Open-vocabulary symbolic construction results.} \textbf{(a)} Transfer success across tasks within each environment (mean $\pm$ std over 3 seeds). \textbf{(b)} Training costs (total LLM tokens in millions) and library compactness (source code in KB averaged across the curriculum). See \cref{app:curator-results} for environment-specific results.}
\label{tab:llm-combined}
\begin{minipage}[t]{0.6\linewidth}
\centering
(a) Transfer success\\[2pt]
\begin{tabular}{lcccccc}
\toprule
& \multicolumn{3}{c}{BabyAI} & \multicolumn{3}{c}{Zelda} \\
\cmidrule(lr){2-4}\cmidrule(lr){5-7}
Method & Fwd. & Bwd. & H.O. & Fwd. & Bwd. & H.O. \\
\midrule
PPO          & .29{\tiny$\pm$.13} & .33{\tiny$\pm$.15} & .01{\tiny$\pm$.00} & .03{\tiny$\pm$.01} & .08{\tiny$\pm$.05} & .03{\tiny$\pm$.02} \\
Motif        & .04{\tiny$\pm$.03} & .23{\tiny$\pm$.10} & .12{\tiny$\pm$.05} & .19{\tiny$\pm$.03} & .40{\tiny$\pm$.05} & .18{\tiny$\pm$.02} \\
WorldCoder   & .32{\tiny$\pm$.11} & .40{\tiny$\pm$.10} & .35{\tiny$\pm$.10} & \textbf{.35}{\tiny$\pm$.02} & .36{\tiny$\pm$.02} & .38{\tiny$\pm$.02} \\
Hoarder & .20{\tiny$\pm$.11} & .24{\tiny$\pm$.10} & .03{\tiny$\pm$.01} & .07{\tiny$\pm$.04} & .12{\tiny$\pm$.02} & .06{\tiny$\pm$.00} \\
\textbf{Curator} & \textbf{.51}{\tiny$\pm$.19} & \textbf{.61}{\tiny$\pm$.07} & \textbf{.47}{\tiny$\pm$.13} & .24{\tiny$\pm$.07} & \textbf{.51}{\tiny$\pm$.06} & \textbf{.48}{\tiny$\pm$.01} \\
\bottomrule
\end{tabular}
    \end{minipage}\hfill
    \begin{minipage}[t]{0.34\linewidth}
    \centering
        (b) Training cost \& footprint\\[2pt]
        \begin{tabular}{lcc}
        \toprule
        Method & Tokens (M) & Library (KB) \\
        \midrule
        \multicolumn{3}{l}{BabyAI}\\
        WorldCoder      & 6.50 & 18.3 \\
        \textbf{Curator} & \textbf{5.14} & \textbf{8.0} \\
        \midrule
        \multicolumn{3}{l}{Zelda}\\
        WorldCoder      & 4.13 & 7.1 \\
        \textbf{Curator} & \textbf{2.94} & \textbf{4.4} \\
        \bottomrule
        \end{tabular}
    \end{minipage}
\vspace{-2em}
\end{table}

\textbf{(A) Curated libraries transfer backward and generalize to held-out environments.}
Table~\ref{tab:llm-combined}a reports success rates, where the Curator obtains the best backward and held-out transfer in both environments (see \cref{app:curator-results} for consistency across seeds). 
WorldCoder is competitive and exceeds the Curator on Zelda forward transfer, but failed on the backward and held-out columns. Thus, curated abstractions retroactively improve earlier environments and generalize to unseen ones, whereas typically accumulated code does not. 
In comparison, PPO and Motif transfer poorly because their policies are tied to the environment they were trained in. Motif's preference labels are at chance ($50.2\%$) without access to the explicit task specification, so its learned reward carries no signal (\cref{app:baselines}).
\Cref{fig:llm_example} shows which representational candidates were selected (green) or discarded (red) by $\repemp$. General schemas such as \texttt{navigate\_to\_object\_to\_grab} and \texttt{acquire\_key\_then\_proceed} are retained, while position- and context-specific candidates (\texttt{pickup\_object\_at\_position\_4\_2}, \texttt{enter\_goal\_room\_from\_north}) are discarded even though they succeed on their own probes.

\textbf{(B) Curation lowers LLM cost and keeps the library compact.}
Table~\ref{tab:llm-combined}b reports total LLM tokens consumed during training and the size of the final library.
WorldCoder's reward model grows monotonically with the number of environments (on BabyAI it triples from env.~0 to env.~3) because each new environment requires a new model, and prior branches cannot be deleted without breaking earlier goals. 
In contrast, the Curator selects sparingly and preserves reusable elements by carrying over typed predicates and operator schemas, such that new environments rebind the same handful of proven symbols to new objects. Its final library is $2.3\times$ (BabyAI) and $1.6\times$ (Zelda) smaller than WorldCoder's, and remains approximately constant in size across the curriculum (\cref{app:curator-compactness}).

\textbf{Failure case. }
On one BabyAI held-out environment, every method including the Curator was below $15\%$ success (\cref{fig:curator-heldout-babyai}). The environment requires a conjunction of binary object-object relations (``move the A next to the B and the C next to the D''), but every training goal the LLM proposed involved a single object or the agent's own state (holding, unlocked), never a relation between two objects. Even when the proposer generates a candidate \texttt{next-to} predicate, the witness set $W_t$ contains no compositional goal on which it could score.
Although $\repemp$ behaves correctly on the probes it was given, they did not cover the required goal structure. \cref{eq:expected-action-value} is in this sense conditional on the quality of probes in how well they capture future tasks (\cref{app:coverage-proxy}). Increasing the diversity of goal proposals~\citep{davidson2022creativity, perez2026cultural} is thus an important future direction.

\textbf{Exp.~3 summary.}
In both environments the vocabulary space is unbounded, such that the uncurated retention of acquired representations hinders performance, whether representations are symbolic proposals (Hoarder) or one synthesized reward function per task (WorldCoder). Instead, $\repemp$ as a curation criteria achieves better transfer in each domain, with lower LLM tokens and smaller libraries. And although absolute success rates remain modest, we read these results as preliminary, directional evidence for $\repemp$ as a model construction criterion rather than as a benchmark claim. More environments are needed to test the generality and probe the boundary cases.

\section{Discussion}

We asked which representational elements an agent should build, keep, and reuse. We propose \textit{representational empowerment}  $\repemp$ as the answer, which quantifies the value of a representational element by the extent to which it expands the plannable goal space per unit of representational cost. 
$\repemp$ provides an extension of the classic \textit{environmental empowerment} $\envemp$, by turning inwards to focus on the agent's choice of representations rather than control over the external environment. 
%Exp. 1 summary
We showed that human learners selectively chose the same granularity of representations as predicted by $\repemp$ in a causal learning task, leading to better performance. 
%Exp. 2 summary
Simulations revealed representation selection via $\repemp$ as the main normative factor, which more flexibly adapts between different optimal granularities than alterative models.
%Exp. 3 summary
Lastly, in the open-vocabulary regime, an LLM Curator scored the representations it had itself proposed by the same criterion, retaining a compact library that transferred better at lower token costs.
%and an LLM curator scoring its own inventions by the same 

\textbf{Limitations.} 
The human experiment used a fixed set of interventions and representational granularities, with evidence for $\repemp$ provided only via behavioral and model-based analyses, rather than process-level evidence. A mechanistic understanding of the normative benefits of $\repemp$ thus only comes from agent-based simulations.
Additionally, the open-vocabulary experiments used only two types of grid-worlds and should be read as a directional rather than a benchmarking result. The LLM curator can also only score what the proposer generates, so a narrow proposal distribution limits the effectiveness of $\repemp$. This is captured in a failure case, where $\repemp$ behaves correctly given the probes it saw, but the probes themselves did not sufficiently cover the relevant goal structure.

\textbf{Conclusion.} Continual learning, world-model learning, and causal discovery typically operate within a predefined and fixed representational vocabulary. Yet, the question of what representational vocabulary to build and maintain comes prior to such problems. Our results offer a candidate solution: agents should represent what expands their achievable goal space, then compress until only the minimum sufficient structure remains. Whether \emph{Representational Empowerment} generalizes more broadly is still an open question, but here, we provide a first account of representational dynamics.

\bibliographystyle{abbrvnat}
\bibliography{reference}

\newpage
\appendix
\onecolumn
\section{Related Work}\label{app:related}

The related works differ in which component of modeling they allow to change and which criterion governs that change. State-abstraction methods select task representations; symbolic and program-learning methods construct reusable vocabularies; empowerment and autotelic methods expand behavioral competence. Our framework connects these problems by treating both the environment-specific model $M$ and the persistent library $\Lib$ as objects of selection.

\textbf{Task-relative models and abstractions.}
State abstraction formalizes which distinctions a decision-making agent can discard. Bisimulation and approximate bisimulation preserve reward and transition structure~\citep{ferns2004metrics,castro2020scalable}; $\phi$-relevance and $Q^*$-irrelevance preserve policy or value information~\citep{li2006towards,abel2016near}; and PAC abstractions extend such guarantees across a distribution of tasks~\citep{abel2018state}. Related objectives include value equivalence for specified Bellman computations~\citep{grimm2020value}, successor features for transfer across reward functions~\citep{barreto2018transfer,lehnert2020successor}, goal-conditioned bisimulation~\citep{hansen2022bisimulation}, and skill-compatible neurosymbolic abstractions~\citep{ahmetoglu2026skill}. These methods generally optimize an abstraction of a supplied observation or state space under a specified equivalence or sufficiency criterion. Our setting extends representation selection to heterogeneous elements---including observation granularities, causal relations, predicates, and operator schemas---that can be added to or removed from $M$ and $\Lib$. Their value is determined by the goal-achievement distinctions they support and the cost of retaining them.

\textbf{Constructing and reusing symbolic vocabularies.}
Action-model induction and neuro-symbolic planning learn predicates and operator schemas that support planning~\citep{yang2007learning,cresswell2013acquiring,silver2023predicate,silver2022learning,chitnis2022learning}. LLM-based agents similarly construct PDDL domains, executable world models, or reusable skills from interaction~\citep{silver2022pddl,guan2023leveraging,tang2024worldcoder,wang2023voyager}. Program-induction systems such as EC and DreamCoder jointly solve tasks and learn abstractions that compress programs and improve subsequent search~\citep{ellis2021dreamcoder,zhou2026pathdependent}. These approaches establish that representational vocabularies can be learned. Their objectives usually emphasize planning performance, sample efficiency, or compression over a task corpus. We focus on retention where an element persists when execution shows that it improves goal achievement given the current library and its representational cost.

\textbf{Behavioral coverage and persistent competence.}
Classical empowerment measures controllable diversity through the mutual information between actions and successor states~\citep{salge2014empowerment,mohamed2015variational}. Skill discovery and option construction seek repertoires that span the controllable state space~\citep{gregor2016variational,eysenbach2018diversity,machado2017laplacian,park2024foundation}, while goal-conditioned and autotelic agents organize exploration around achieved-goal coverage~\citep{pitis2020maximum,colas2022autotelic}. These objectives select policies, skills, or goals within an existing representation. Representational empowerment instead evaluates changes to the vocabulary used to model and plan. Parameter-level continual-learning methods preserve knowledge through regularization, architectural isolation, or replay~\citep{kirkpatrick2017overcoming,serra2018overcoming,rolnick2019experience}; such mechanisms can stabilize components beneath a symbolic library, while library curation determines which reusable elements the system retains.

\textbf{Cognitive foundations.}
Our proposal draws on resource-rational accounts of representation choice~\citep{lieder2020resource}, constructivist models of causal learning~\citep{bramley2017formalizing}, and the ``child as scientist'' tradition~\citep{gopnik2012reconstructing}. It is also complementary to model-synthesis accounts in which distributed knowledge is assembled into a task-specific probabilistic model~\citep{wong2025modeling}. Representational empowerment supplies an operational hypothesis for valuing the candidate elements used in that assembly and deciding which should persist across tasks.

\section{Representational Empowerment}\label{app:theory}

This section distinguishes exact representational empowerment, its goal-projected channel, and the cost-regularized coverage proxy used in the open-vocabulary experiment. We show that projection retains only distinctions visible through the selected goal-achievement signature and that, under stated conditions, the proxy selects a minimum-cost goal-sufficient model. The proxy is an empirical decision rule, not a numerical estimate of channel capacity.

\subsection{Problem setup}\label{app:problem-setup}

The agent encounters environments $e_1,e_2,\ldots\sim\mathcal E$, each generated by an inaccessible process $W_e$. A candidate space $\Cand$ contains representational elements such as observation granularities, causal edges, predicates, and operator schemas. In environment $e$, the agent constructs a task model $M_e\subseteq\Cand$ and carries a library $\Lib\subseteq\Cand$ across environments. Construction edits $M_e$ within an environment; curation edits $\Lib$ across environments. We represent both as transitions of the internal state $Z=(M,\Lib)$.

\subsection{Representation-state channels}\label{app:rep-channel}

Let $\mathcal Z$ be the space of representation states and $\Omega$ a finite set of edits, such as \textsc{add}, \textsc{remove}, \textsc{refine}, \textsc{compose}, and \textsc{prune}. An edit sequence $u=\omega_{1:T}\in\Omega^T$ induces the kernel
\[
K_T(z'\mid z,u)=\Pr(Z_T=z'\mid Z_0=z,U=u),
\]
conditional on the current environment and evidence.

\begin{definition}[Exact representational empowerment]
\label{app:def:exact-repemp}
The $T$-step representational empowerment of state $z$ is
\[
\repemp=\mathcal R_T(z)
=\sup_{\pi\in\Delta(\Omega^T)}I_{\pi K_T}(U;Z_T\mid Z_0=z),
\]
where $U\sim\pi$ and $Z_T\sim K_T(\cdot\mid z,U)$.
\end{definition}

This is the internal analogue of environmental empowerment~\citep{abel2025plasticity,zhou2025agent}: edits, rather than actions, index controllable outcomes. Since
\[
I(U;Z_T\mid Z_0=z)
=H(Z_T\mid Z_0=z)-H(Z_T\mid U,Z_0=z),
\]
$\repemp$ requires future representations to be both diverse and reliably attributable to particular edits.

\subsection{Grounding by goal projection}\label{app:goal-projection}

Exact empowerment distinguishes all future representation states, including syntactic variants with identical behavioral consequences. We therefore observe a task model through its goal-achievement signature
\[
\Gamma_e(M)=
\bigl(\ind[g\text{ is achieved by planning with }M\text{ and executing in }e]\bigr)_{g\in\Gach_e}.
\]

\begin{definition}[Grounded representational empowerment]\label{app:def:grounded-repemp}
The grounded representational empowerment of state $z$ in environment $e$ is
\[
\mathcal R^{\Gamma}_T(z;e)
=\sup_{\pi\in\Delta(\Omega^T)}
I_{\pi K_T}\!\left(U;\Gamma_e(Z_T)\mid Z_0=z\right).
\]
\end{definition}

\begin{proposition}[Projection lower bound]\label{app:prop:projection}
For every deterministic projection $\Gamma_e$,
\[
\mathcal R^{\Gamma}_T(z;e)\leq\mathcal R_T(z).
\]
\end{proposition}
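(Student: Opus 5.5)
The plan is to apply the data-processing inequality pointwise in the input distribution and then pass to the supremum. Fix $z$ and an arbitrary input distribution $\pi\in\Delta(\Omega^T)$ over edit sequences. Under the joint law $\pi K_T$, we have $U\sim\pi$, $Z_T\sim K_T(\cdot\mid z,U)$, and $Y:=\Gamma_e(Z_T)$. Because $\Gamma_e$ is a deterministic function of $Z_T$ alone (the environment $e$ and goal set $\Gach_e$ are held fixed), $U\to Z_T\to Y$ is a Markov chain conditional on $Z_0=z$: given $Z_T$, the value of $Y$ is determined and hence independent of $U$.

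The key step is the chain-rule argument
\[
I(U;Z_T,Y\mid Z_0=z)=I(U;Z_T\mid Z_0=z)+I(U;Y\mid Z_T,Z_0=z)=I(U;Z_T\mid Z_0=z),
\]
where the second term vanishes because $Y$ is a function of $Z_T$. Expanding the same quantity in the other order gives $I(U;Y\mid Z_0=z)+I(U;Z_T\mid Y,Z_0=z)$. Since conditional mutual information is nonnegative, it follows that
\[
I_{\pi K_T}(U;\Gamma_e(Z_T)\mid Z_0=z)\le I_{\pi K_T}(U;Z_T\mid Z_0=z)\le \mathcal R_T(z).
\]
This holds for every $\pi$, so taking the supremum over $\pi$ on the left yields $\mathcal R^{\Gamma}_T(z;e)\le\mathcal R_T(z)$. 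The inequality also holds when $\mathcal R_T(z)=+\infty$.

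The main obstacle is measure-theoretic rather than conceptual. If $\mathcal Z$ is countably infinite or general, the chain rule needs well-defined conditional mutual information. Since $\Omega^T$ is finite, $I(U;\cdot)\le\log|\Omega|^T<\infty$, which keeps all the terms finite. For general $\mathcal Z$, I would either invoke the data-processing inequality for mutual information defined as a supremum over finite partitions, since every partition of the range of $\Gamma_e$ pulls back to a partition of $\mathcal Z$, or simply note that $\Gamma_e$ takes values in the finite set $\{0,1\}^{\Gach_e}$ when $\Gach_e$ is finite. I would also remark that equality holds exactly when $\Gamma_e$ is a sufficient statistic of $Z_T$ for $U$ under the optimal $\pi$, which motivates the phrase ``preserves exactly those distinctions visible through goal achievement.''
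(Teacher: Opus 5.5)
Your proof is correct and follows the paper's own argument: conditional on $Z_0=z$, $U\to Z_T\to\Gamma_e(Z_T)$ is a Markov chain, so data processing bounds the grounded mutual information by the exact one for each input distribution $\pi$, and taking the supremum gives the result. Your chain-rule derivation and your finiteness remarks (the bound $T\log|\Omega|$ and the finite range $\{0,1\}^{\Gach_e}$) only spell out what the paper invokes as standard data processing.
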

\begin{proof}
Conditional on $Z_0=z$, $U\to Z_T\to\Gamma_e(Z_T)$ is a Markov chain. The result follows from data processing for each $\pi$ and then taking the supremum.
\end{proof}

The inequality can be strict. Suppose $n$ edits deterministically produce distinct representations $z_1,\ldots,z_n$ but all have the same signature. The exact channel has capacity $\log n$, whereas the grounded channel is constant and has capacity zero. Thus the projection excludes purely syntactic diversity. When two representations have the same signature, the cost term introduced below selects the cheaper one.

\subsection{From channel capacity to the coverage proxy}\label{app:coverage-proxy}

Define the coverage of model $M$ by
\[
\score_e(M)=\sum_{g\in\Gach_e}\Gamma_e(M)_g.
\]
For a modeling action $a$ that yields observation $o$ and update $U(M_t,a,o)$, the expected cost-regularized gain is
\[
\widehat V_t(a)=
\mathbb E_{o\sim p_t(\cdot\mid a)}
\!\left[\score_e(U(M_t,a,o))-\score_e(M_t)\right]
-\lambda\Delta\cost_t(a).
\]

For an unvalidated candidate $\sigma$, let $q_t(\sigma)$ be its estimated validity and $W_t(\sigma)$ the goals currently blocked under $M_t$ that $\sigma$ could unlock. The open-vocabulary implementation uses
\[
\widehat V_t(\sigma)
\approx q_t(\sigma)
\sum_{g\in W_t(\sigma)}
\left[
\widehat p_t(g\mid M_t\cup\{\sigma\})
-\widehat p_t(g\mid M_t)
\right]
-\lambda c_\sigma,
\]
where $\widehat p_t$ is the Actor's empirical achievement rate. Restricting the sum to $W_t(\sigma)$ removes goals whose achievement bit cannot change.

This proxy preserves the local distinction relevant to the grounded channel. In a deterministic channel with inputs \textsc{keep} and $\textsc{add}(\sigma)$, capacity is zero exactly when adding $\sigma$ leaves $\Gamma_e$ unchanged; any changed achievement bit makes capacity positive. The proxy scores the magnitude and reliability of such changes rather than estimating capacity directly.

For a multi-element proposal, the exact coverage gain decomposes into conditional marginal gains under any ordering. The implementation approximates it by summing singleton gains, which is exact for independent candidates with disjoint witness sets. Redundancy can cause overestimation and complementarity can cause underestimation, so proposed bundles are verified jointly before acceptance. Higher-order representational synergy remains a limitation of the singleton approximation.

\subsection{Minimum-cost sufficiency for the coverage proxy}\label{app:minimal}

\begin{definition}[Goal sufficiency]
A model $M$ is goal-sufficient in environment $e$ if
\[
\score_e(M)=\score_e^\star:=\max_{M'\subseteq\Cand}\score_e(M').
\]
It is minimum-cost goal-sufficient if no goal-sufficient model has lower cost.
\end{definition}

Assume additive positive costs,
\[
\cost(M)=\sum_{\sigma\in M}c_\sigma,\qquad c_\sigma>0.
\]

\begin{theorem}[Minimum-cost goal sufficiency]\label{app:thm:minimal}
Assume $\Cand$ is finite and not every $M\subseteq\Cand$ is goal-sufficient. Let
\[
C_{\max}=\sum_{\sigma\in\Cand}c_\sigma,
\qquad
\Delta_e=
\min_{M:\,\score_e(M)<\score_e^\star}
\bigl(\score_e^\star-\score_e(M)\bigr)>0.
\]
If $0<\lambda<\Delta_e/C_{\max}$, every maximizer of
\[
\score_e(M)-\lambda\cost(M)
\]
is a minimum-cost goal-sufficient model.
\end{theorem}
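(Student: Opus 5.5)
The plan is a direct comparison argument with two steps. First, show that every maximizer of $J(M):=\score_e(M)-\lambda\cost(M)$ is goal-sufficient. Second, show that among goal-sufficient models it has least cost. Since $\Cand$ is finite, the power set is finite and a maximizer exists. Note also that $\Delta_e>0$ is well defined: the set of non-sufficient models is nonempty by assumption, finite, and each gap is a positive integer, so $\Delta_e\ge 1$.

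\emph{Step 1 (maximizers are goal-sufficient).} Fix any goal-sufficient $M^\star$, which exists because $\score_e^\star$ is a maximum over a finite set. Let $M$ be a model with $\score_e(M)<\score_e^\star$. Using $0\le\cost(\cdot)\le C_{\max}$, which holds by additivity and positivity of the $c_\sigma$ over subsets of $\Cand$, we get
\[
J(M^\star)-J(M)=\bigl(\score_e^\star-\score_e(M)\bigr)-\lambda\bigl(\cost(M^\star)-\cost(M)\bigr)\ge \Delta_e-\lambda C_{\max}>0 .
\]
Hence $M$ is not a maximizer, so every maximizer lies in the goal-sufficient set. This is the step where the hypothesis $\lambda<\Delta_e/C_{\max}$ is used. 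The only care needed is bounding the cost difference by $C_{\max}$ rather than by $\cost(M^\star)$ alone; since $\cost(M)\ge 0$, this is immediate.

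\emph{Step 2 (minimum cost).} On the goal-sufficient set, $J(M)=\score_e^\star-\lambda\cost(M)$. Because $\lambda>0$, maximizing $J$ there is equivalent to minimizing $\cost$. Let $\hat M$ be a maximizer of $J$ over all subsets. By Step 1, $\hat M$ is goal-sufficient. Suppose some goal-sufficient $M'$ had $\cost(M')<\cost(\hat M)$. Then $J(M')>J(\hat M)$, which contradicts maximality. So $\hat M$ is minimum-cost goal-sufficient.

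I do not expect a serious obstacle. The main point is to check that the argument covers \emph{every} maximizer, including ties, and this is automatic from the strict inequalities above. I would also remark that positivity of the $c_\sigma$ is used only to get $\cost\ge 0$ and finiteness of $C_{\max}$, and that the converse also holds: every minimum-cost goal-sufficient model attains the maximum of $J$.
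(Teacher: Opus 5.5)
Your proof is correct and follows the paper's own argument: compare any coverage-suboptimal model with a fixed goal-sufficient one, using $\Delta_e-\lambda C_{\max}>0$. Then note that on the goal-sufficient set the objective reduces to $\score_e^\star-\lambda\cost(M)$, so maximizing it minimizes cost.
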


\begin{proof}
Let $M$ be suboptimal in coverage and let $\widetilde M$ be any goal-sufficient model, which exists because $\Cand$ is finite. Their objective difference in favor of $\widetilde M$ is
\[
\score_e^\star-\score_e(M)
-\lambda\bigl(\cost(\widetilde M)-\cost(M)\bigr)
\geq\Delta_e-\lambda C_{\max}>0.
\]
Hence no coverage-suboptimal model maximizes the objective. Among goal-sufficient models the score is constant, so maximizing the objective minimizes cost.
\end{proof}

The theorem is a consistency result for the finite proxy under additive costs, exact signatures, and a sufficiently small $\lambda$; it does not claim that the agent knows $\Delta_e$ or that the proxy uniquely formalizes resource-rational construction.

\begin{definition}[Goal-irrelevant element]
An element $\sigma\in\Cand$ is goal-irrelevant in $e$ if
\[
\score_e(M\cup\{\sigma\})=\score_e(M)
\qquad\text{for all }M\subseteq\Cand\setminus\{\sigma\}.
\]
\end{definition}

\begin{corollary}[Pruning of goal-irrelevant elements]\label{app:cor:nuisance}
No maximizer in \cref{app:thm:minimal} contains a goal-irrelevant element.
\end{corollary}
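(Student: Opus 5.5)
The plan is an exchange argument: take any maximizer $M^\star$ of $\score_e(M)-\lambda\cost(M)$ from \cref{app:thm:minimal}, suppose it contains a goal-irrelevant element $\sigma$, and show that deleting $\sigma$ strictly improves the objective. That contradicts maximality.

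Fix a maximizer $M^\star$ and suppose $\sigma\in M^\star$. Set $M'=M^\star\setminus\{\sigma\}$. Then $M'\subseteq\Cand\setminus\{\sigma\}$, so the definition of goal irrelevance applies directly and gives
\[
\score_e(M^\star)=\score_e(M'\cup\{\sigma\})=\score_e(M').
\]
Since costs are additive,
\[
\cost(M^\star)=\cost(M')+c_\sigma.
\]
Combining the two,
\[
\bigl(\score_e(M')-\lambda\cost(M')\bigr)-\bigl(\score_e(M^\star)-\lambda\cost(M^\star)\bigr)=\lambda c_\sigma>0,
\]
where positivity uses $\lambda>0$ (part of the theorem's hypothesis) and $c_\sigma>0$. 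So $M'$ has a strictly larger objective than $M^\star$, which contradicts the choice of $M^\star$.

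An alternative route goes through \cref{app:thm:minimal}: every maximizer is minimum-cost goal-sufficient. The model $M'$ has the same coverage as $M^\star$, so it is also goal-sufficient, yet it is strictly cheaper. That contradicts minimum cost. Either route works; the direct one needs only $\lambda>0$, not the upper bound on $\lambda$.

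I expect no real obstacle. The only point to check is that the irrelevance definition is quantified over sets that exclude $\sigma$, and that is exactly why the argument is applied to $M'=M^\star\setminus\{\sigma\}$ rather than to $M^\star$ itself.
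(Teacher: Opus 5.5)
Your proof is correct and matches the paper's, whose entire argument is that deleting a goal-irrelevant element preserves coverage and strictly lowers cost, contradicting maximality. Your added observation that the direct route needs only $\lambda>0$, and not the upper bound $\lambda<\Delta_e/C_{\max}$, is accurate.
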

\begin{proof}
Removing such an element preserves coverage and strictly reduces cost.
\end{proof}

In the causal experiments, a variable or edge outside every intervention-to-goal path is goal-irrelevant when the score depends only on intervention-to-goal reachability. Learning it may reduce uncertainty about $W_e$, but retaining it does not change the goal-achievement signature.

\subsection{Behavioral predictions for model construction}\label{app:predictions}

\begin{proposition}[Refinement during discovery, compression after sufficiency]
\label{app:prop:nonstationary}
There exists a two-stage construction problem in which \texttt{fine} has greater expected proxy value during discovery, whereas a coarser representation has greater value after the relevant structure is learned.
\end{proposition}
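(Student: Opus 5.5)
The statement is existential, so I will prove it by exhibiting one explicit two-stage problem and computing the proxy value $\widehat V_t$ of \cref{app:coverage-proxy} at each stage. A granularity choice $\ell$ is the modeling action $a$. Its value is the expected coverage gain $\mathbb E_o[\score_e(U(M_t,\ell,o))-\score_e(M_t)]$ minus $\lambda\,\Delta\cost_t(\ell)$. I will take costs increasing in resolution, $c_{\texttt{coarse}}<c_{\texttt{fine}}$, which matches the positive additive costs assumed for \cref{app:thm:minimal}.

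\textbf{Construction.} I use two binary-valued flasks $X\to Y$ with a deterministic mechanism $Y=f(X)$. The unknown $f$ is drawn uniformly from a small hypothesis class $\{f_1,f_2\}$, and the single goal is to set $Y$ to a target value. The world $W$ also carries an extra within-bin detail, for example a fine-grained offset of $Y$, and the goal does not depend on it. \texttt{fine} reveals both which $f_i$ holds and the offset. \texttt{coarse} reveals only a binned value of $Y$. I choose the bins so that the two hypotheses produce the same coarse observation under the available intervention, but achieve the goal with different interventions. At the \emph{discovery} stage the belief is uniform over $\{f_1,f_2\}$. A \texttt{fine} observation then identifies $f$, so the agent can plan the correct intervention and $\score_e$ increases by $1/2$ in expectation. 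A \texttt{coarse} observation leaves the posterior unchanged, so its coverage gain is $0$. Hence $\widehat V(\texttt{fine})-\widehat V(\texttt{coarse})=1/2-\lambda(c_{\texttt{fine}}-c_{\texttt{coarse}})$, which is positive whenever $\lambda<1/(2(c_{\texttt{fine}}-c_{\texttt{coarse}}))$. This is the same smallness condition on $\lambda$ as in \cref{app:thm:minimal}.

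\textbf{After sufficiency.} Once $f$ is known, $M_t$ already achieves the goal, so $\score_e(M_t)=\score_e^\star$ and no observation can raise coverage. Both granularities therefore have coverage gain $0$, and the comparison reduces to $-\lambda c_{\texttt{fine}}<-\lambda c_{\texttt{coarse}}$, so \texttt{coarse} has strictly greater value. One could also view this through \cref{app:cor:nuisance}. The extra distinctions that \texttt{fine} retains, the within-bin offset, are goal-irrelevant once $f$ is fixed, so the minimum-cost goal-sufficient representation drops them.

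\textbf{Main obstacle.} The delicate step is arranging the discovery stage correctly. \texttt{coarse} must be genuinely uninformative about the goal-relevant hypothesis, while the post-learning stage must make coarse resolution sufficient. That requires the partition to be coarse with respect to the discovery intervention, yet still fine enough to verify goal attainment. I will handle this by letting the coarse bin merge the two possible outcome values of the probe intervention, while the goal test uses only achievement bits, which $\Gamma_e$ supplies regardless of granularity. With that arrangement the expectations above are exact and need no noise model. If desired, small observation noise can be added afterwards and the same inequalities hold by continuity.
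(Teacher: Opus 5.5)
Your proposal is correct and uses the same two-stage argument as the paper's proof. During discovery, a distinction available only under \texttt{fine} yields an expected coverage gain larger than its extra cost; once $\score_e$ is saturated, both granularities tie on coverage, so the $\lambda$-weighted cost term favors the coarser one. The only difference is concreteness: the paper simply stipulates these two conditions, whereas you build an explicit instance that realizes them (two hypotheses merged by the coarse bin) and compute the values.
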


\begin{proof}
Let $M_f$ and $M_c$ be fine and coarse representations with $\cost(M_f)>\cost(M_c)$. During discovery, let a distinction available only under $M_f$ enable an expected score gain larger than its additional cost; then $M_f$ has greater proxy value. After the relevant relation is learned, let $\score_e(M_f)=\score_e(M_c)=\score_e^\star$; the proxy then prefers the cheaper $M_c$.
\end{proof}

The prediction is stage-dependent rather than a fixed preference for either resolution: fine representations can expose useful distinctions before the relevant structure is known, while the same goal coverage may later be retained more cheaply. This is the pattern tested by the within-task granularity analysis in Exp.~1.

\subsection{Relation to alternative objectives}\label{app:alternatives}

\paragraph{Information gain.}
For an edit $\sigma$ producing observation $o$ about an unknown model variable $M$, information gain is
\[
IG(\sigma)=H(M)-\mathbb E_{o\sim p(o\mid\sigma)}[H(M\mid o)].
\]
It rewards uncertainty reduction whether or not the resolved distinction affects achievable goals. Thus two tests can have equal information gain while only one changes $\Gamma_e$; grounded $\repemp$ distinguishes them through their projected outcomes.

\paragraph{Environmental empowerment.}
Environmental empowerment measures the capacity of the action-to-state channel, $\sup_\rho I_\rho(A_{1:T};S_T\mid S_0=s)$, with the agent's representation fixed. It can therefore be large when many physical states are controllably reachable but the model lacks a relation needed for a particular goal. Representational empowerment instead makes edits the channel inputs, and its grounded form retains only their consequences for $\Gamma_e$. The objectives accordingly govern different decisions: $\envemp$ selects how to act within a representation, whereas $\repemp$ evaluates changes to the representation itself.

\section{Causal Learning Environment}\label{app:model-selection-env}

Exp.~1 and Exp.~2 use the same causal-learning environment with different graphs, curricula, and observation-noise regimes. The environment is adapted from \citet{ke2021systematic} and extends standard causal discovery by allowing the learner to choose the granularity at which intervention effects are represented.

\subsection{Shared environment}\label{sec:env-shared}

\textbf{Variables and state.}
The environment instantiates a linear SEM over a set of latent variables $\mathcal{V}=\{V_1,\dots,V_n\}$; the size and connectivity of $\mathcal{V}$ are experiment-specific (\cref{sec:env-human,sec:env-sim}). Each variable carries a \emph{fundamental} scalar value $x\in\{1,2,\dots,16\}$ and three property \emph{roles} ($\texttt{strong}$, $\texttt{weak}$, and $\texttt{noise}$). Only $\texttt{strong}$ and $\texttt{weak}$ have causal mechanisms, while $\texttt{noise}$ is a distractor role so that participants and agents must discriminate causally relevant roles from irrelevant ones.
In Exp.~1, the mapping from surface labels (e.g., ``color,'' ``temperature'') to property roles is randomized across participants and held fixed within a participant. This prevents surface-label familiarity---such as a prior on semantic association---from biasing which properties are probed first or trusted more. 
In Exp.~2, role identity is fixed across seeds since the simulated agent has no priors to control for.

\textbf{Representation granularities.}
Each fundamental value $x$ is observed through one of three discretization maps:
    \[
    \phi_{\mathrm{fine}}(x)=\lceil x/2 \rceil \in \{1,\dots,8\},\quad
    \phi_{\mathrm{medium}}(x)=\lceil x/4 \rceil \in \{1,\dots,4\},\quad
    \phi_{\mathrm{coarse}}(x)=
    \begin{cases}
    \text{Low},& x\le 8,\\
    \text{High},& x> 8.
    \end{cases}
    \]
The granularity $\ell\in\Lambda=\{\texttt{coarse},\texttt{medium},\texttt{fine}\}$ is chosen by the learner on every step and determines only how effects are reported; the underlying propagation in the SEM always uses fundamental values.

\textbf{Mechanisms.}
All edges in the SEM are linear with additive effect noise:
\[
x_{\mathrm{child}} \;=\; e_{\mathrm{child}} + w\, x_{\mathrm{parent}} + \epsilon, \qquad \epsilon\sim\mathcal{N}(0,\sigma^2).
\]
The exogenous component $e_{\mathrm{child}}$ is back-solved from the task's initial state so that pre-intervention observations are consistent with the SEM. 

\textbf{Action space and task model.}
On every step the learner selects an observation granularity $\ell\in\Lambda$, then, conditional on $\ell$, intervenes on a variable-property pair $(v,p)\in\mathcal{V}_T\times\mathcal{P}$, where $\mathcal{V}_T\subseteq\mathcal{V}$ are the variables task $T$ makes visible and $v.p$ denotes property $p$ of variable $v$ (\cref{tab:gt-graph,tab:gt-curriculum}). The intervention sets $v.p$ to a new fundamental value, the mechanism above propagates it to every variable downstream of $v.p$, and the learner observes each affected variable through $\phi_\ell$. How the learner specifies that value is the one part of the action space the two experiments do not share (\cref{sec:env-human,sec:env-sim}). Granularity is the representational element, so the candidate space here is $\Cand=\Lambda$, whereas the intervention acts on the environment and leaves the representation unchanged, and \cref{app:human-action} scores the two decisions under separate rules. The task model $M_t$ is the learner's belief graph after $t$ steps, a directed graph over visible variable-property pairs with confidence-weighted edges, reconstructed from each participant's observations in Exp.~1 (\cref{app:human-methods}) and maintained as a Bayesian posterior in Exp.~2 (\cref{app:simulation-protocol}).

\begin{table}[t]
    \centering
    \scriptsize
    \caption{Ground-truth SEM for Exp.~1. Eight latent variables, nine directed edges with weights and signs. Edges 1-8 appear in at least one task window (\cref{tab:gt-curriculum}); edge 9 ($V_3 \to V_7$) is never jointly observable.}\label{tab:gt-graph}
    \begin{tabular}{clcc}
    \toprule
    \# & Edge & Weight & Sign \\
    \midrule
    1 & $V_1.\texttt{strong} \to V_2.\texttt{strong}$ & 0.70 & $+$\\
    2 & $V_1.\texttt{strong} \to V_3.\texttt{strong}$ & 0.60 & $+$\\
    3 & $V_2.\texttt{strong} \to V_4.\texttt{strong}$ & 0.70 & $+$\\
    4 & $V_3.\texttt{strong} \to V_4.\texttt{strong}$ & 0.50 & $+$\\
    5 & $V_4.\texttt{strong} \to V_5.\texttt{strong}$ & 0.70 & $+$\\
    6 & $V_5.\texttt{strong} \to V_6.\texttt{weak}$ & 0.85 & $+$\\
    7 & $V_5.\texttt{strong} \to V_7.\texttt{weak}$ & $-0.70$ & $-$\\
    8 & $V_6.\texttt{strong} \to V_8.\texttt{strong}$ & 0.85 & $+$\\
    9 & $V_3.\texttt{strong} \to V_7.\texttt{weak}$ & 0.50 & $+$\\
    \bottomrule
    \end{tabular}
\end{table}

\begin{table}[t]
    \centering
    \scriptsize
    \caption{Task curriculum for Exp.~1. Four tasks share variables across consecutive windows to enable cross-task transfer. Each task exposes a different structure (fork, merge, chain) and provides two planning goals reachable in $\le 5$ steps.}
    \begin{tabular}{llll}
    \toprule
    Task & Variables $\mathcal{V}_t$ & Visible edges & Structure \\
    \midrule
    $T_1$ & $\{V_1,V_2,V_3\}$       & $V_1\!\to\!V_2,\; V_1\!\to\!V_3$                         & Fork \\
    $T_2$ & $\{V_2,V_3,V_4,V_5\}$   & $V_2\!\to\!V_4,\; V_3\!\to\!V_4,\; V_4\!\to\!V_5$        & Merge + chain \\
    $T_3$ & $\{V_4,V_5,V_6,V_7\}$   & $V_4\!\to\!V_5,\; V_5\!\to\!V_6\,(+),\; V_5\!\to\!V_7\,(-)$ & Fork with negative edge \\
    $T_4$ & $\{V_5,V_6,V_7,V_8\}$   & $V_5\!\to\!V_6\,(+),\; V_5\!\to\!V_7\,(-),\; V_6\!\to\!V_8$ & Deep chain \\
    \bottomrule
    \end{tabular}
    \label{tab:gt-curriculum}
\end{table}

\subsection{Experiment~1 instantiation: human experiment}\label{sec:env-human}

The human experiment uses a fixed eight-variable and nine-edge causal graph. The full edge list, including mechanism class, weight, and sign, is given in \cref{tab:gt-graph}. The four-task curriculum with visible edges and goals is given in \cref{tab:gt-curriculum}; tasks share variables across the curriculum so that overlapping windows expose the same underlying SEM through different visible subsets.

Participants never see fundamental values, so they specify an intervention in the bins of the selected granularity. Each step moves the chosen property one bin up or down from its current value at $\ell$, and the environment sets the underlying fundamental value to the midpoint of the target bin.

\subsection{Experiment~2 instantiation: simulation}\label{sec:env-sim}

Exp.~2 replaces the fixed eight-variable causal graph with a ten-variable graph randomly sampled per seed. For each of $30$ seeds we draw a SEM over $n_{\mathrm{var}}{=}10$ variables with edge probability $p_e{=}.25$, instantiated under the same SEM mechanism (linear edges, additive Gaussian noise) and observed through the same $\phi_\ell$ maps as Exp.~1. This avoids any dependence of the findings of the granularity-intervention mechanisms on a single hand-designed graph.

Each seed's curriculum is generated procedurally: a sequence of $5$ training tasks followed by $5$ held-out test tasks, each task exposing an overlapping window of $4$--$7$ variables drawn from the seed's graph. Visible edges in a task are the induced subgraph on its visible variables. Consecutive tasks share variables and edges, supporting the same cross-task transfer analyses as the human curriculum. Full per-task generation parameters are in \cref{app:simulation}.

\section{Experiment~1 Supplementary Results}\label{app:human}

\subsection{Participants and procedure}\label{app:human-protocol}

We recruited $n{=}70$ adults through Prolific ($50\%$ female; age $18$-$40$, $M{=}30.68$, $SD{=}6.27$). Participants provided informed consent and were compensated above the local minimum wage, calibrated to the median completion time. The study was approved by the Institutional Review Board at the University of California, Berkeley. After instructions and a worked example, participants completed four tasks and a debrief. Each task comprised up to $20$ exploration steps, a causal-graph report, and two goal-directed tests. At each exploration step, participants selected a granularity $\ell$ and then a variable-property intervention $(v,r)$; they observed only the resulting representation at $\ell$.

We logged granularity choices, intervention choices, observed effects, response time, reported graphs, and test performance. The analyses use $3{,}299$ valid exploration decisions. Model comparisons retain steps on which all alternatives were admissible and every rule returned a finite score under the participant's current belief state.

\subsection{Behavioral analyses}\label{app:human-behavior}

\textbf{Choice persistence.}
Three features of granularity choice motivate the control model that the criterion comparison scores against (\cref{app:human-methods}). Participants repeat the preceding granularity on $67.2\%$ of consecutive within-task steps, compared with $46.4\%$ under a participant-level shuffle that preserves how often each participant uses each granularity but discards the order ($p{<}.001$). $k$-means on participant granularity shares identifies balanced, medium-preferring, and fine-preferring policies, and split-half membership agrees for $51.4\%$ of participants against a $39.2\%$ shuffled baseline ($z{=}2.09$, $p{=}.036$), so participants differ stably in which granularity they favor. A multinomial logit predicting the next granularity shows that the preceding granularity dominates the latest visible intervention effect, contributing $+619.8$ held-out log-likelihood units ($\chi^2{=}1239.7$, coefficient $+1.34$) against $+14.3$ ($\chi^2{=}28.6$); \cref{fig:level_selection_prediction} ranks every predictor in that model. Granularity choice is therefore persistent and largely insensitive to the most recent observation, even though the aggregate distribution still shifts within and across tasks, so the criterion comparison credits a rule only with what it adds beyond stickiness and participant-level preference.

\begin{figure}[t]
    \centering
    \includegraphics[width=0.8\textwidth]{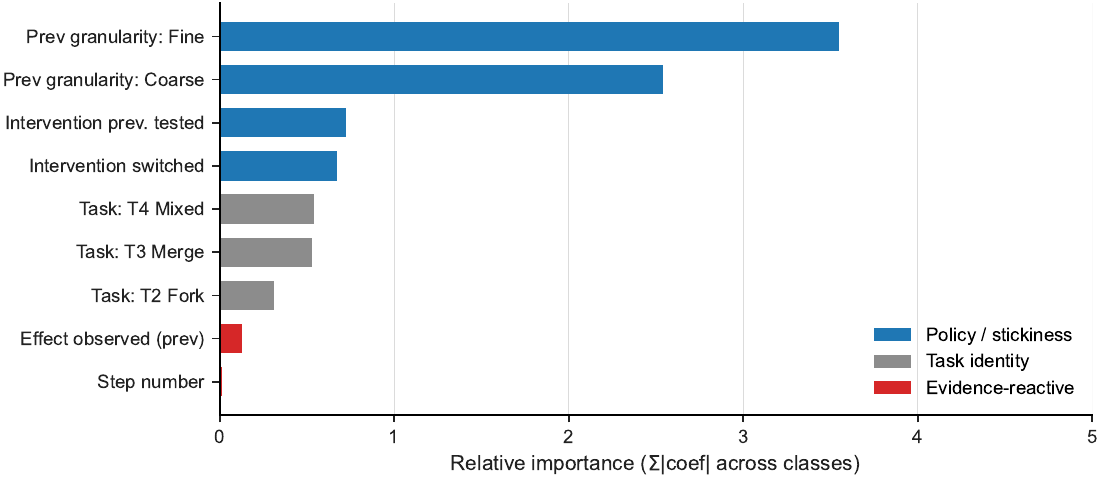}
    \caption{\textbf{Predictors of granularity choice.} Relative importance of each predictor in a multinomial logit over the three granularities, measured as the summed absolute coefficient across the three classes. The preceding granularity outweighs every other predictor by roughly fourfold, task identity contributes moderately, and the evidence available from the previous step---whether that step produced a visible effect, and how far into the task it occurred---contributes least.}
    \label{fig:level_selection_prediction}
\end{figure}

\begin{figure}[t]
    \centering
    \includegraphics[width=\textwidth]{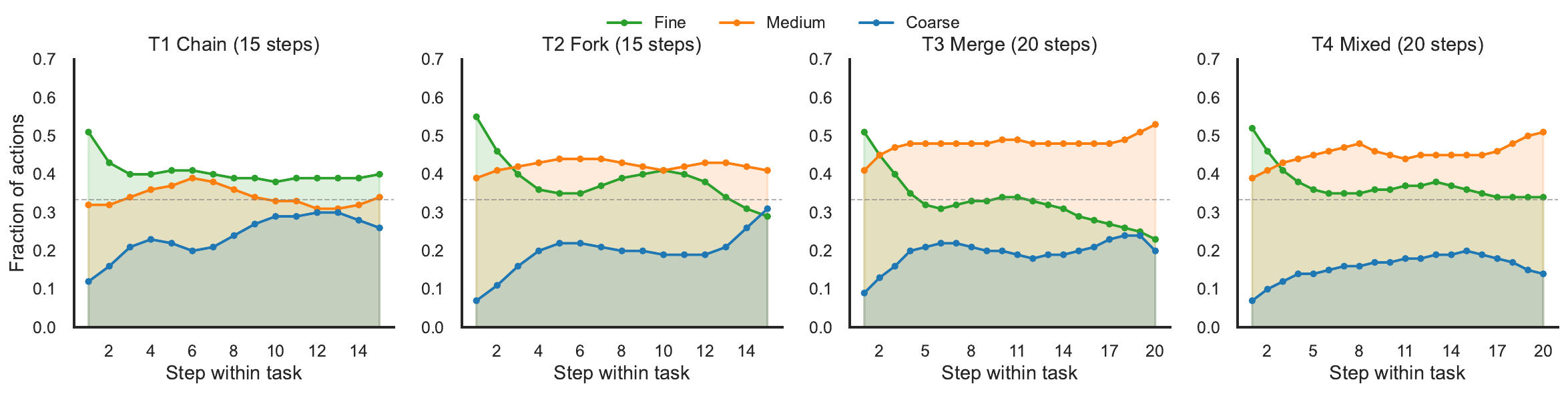}
    \caption{\textbf{Granularity use by task and step.} Across the four tasks, \texttt{medium} use increases while \texttt{coarse} use decreases. Within a task, \texttt{fine} use decreases and \texttt{coarse} use increases from the first to the second half of exploration.}
    \label{fig:level_choice_env_and_step_wise}
\end{figure}

\textbf{Compression within and across tasks.}
Granularity use compresses within a task but converges across tasks, and the two patterns move \texttt{coarse} in opposite directions. Within a task, participants who return to a source they have already probed view it at a coarser granularity more often than a finer one ($557$ vs.~$430$, two-sided binomial test, $p{<}.001$); splitting each task at its median step, \texttt{coarse} use rises from $17.7\%$ to $22.3\%$ and \texttt{fine} use falls from $38.3\%$ to $34.5\%$, while \texttt{medium} use is flat ($44.0\%$ to $43.2\%$). Across tasks the direction of the \texttt{coarse} trend reverses: comparing each participant's first and last task, \texttt{medium} use rises from $35.0\%$ to $45.9\%$ ($t(69){=}2.83$, $p{=}.006$) and \texttt{coarse} use falls from $24.7\%$ to $16.6\%$ ($t(69){=}{-}2.86$, $p{=}.006$), while \texttt{fine} use does not change reliably ($40.3\%$ to $37.5\%$, $t(69){=}{-}0.67$, $p{=}.50$).
This is the stage-dependence predicted by \cref{app:prop:nonstationary}. Detail acquired during discovery is dropped once the local structure is known, whereas across tasks participants settle on the cheapest representation that remains goal-sufficient which \texttt{coarse} is not. Taskwise trajectories are in \cref{fig:level_choice_env_and_step_wise}.

\textbf{Granularity use and goal performance.}
The correlations reported in \cref{sec:human} are between-participant associations across the $70$ participants; \texttt{coarse} use predicts neither measure ($r{=}.10$, $p{=}.43$ for achievement rate; $r{=}{-}.14$, $p{=}.27$ for steps per goal). To distinguish participants' overall \texttt{medium} use (the between-participant component) from each task's deviation from that participant's average (the within-participant component), we fit a Mundlak logistic mixed model over the $560$ binary goal outcomes, with task fixed effects and a participant random intercept. Overall \texttt{medium} use was positively associated with goal achievement ($\beta{=}1.29$, $\mathrm{SE}{=}.62$, $z{=}2.07$, $p{=}.039$), whereas the task-specific deviation was positive but unreliable ($\beta{=}.59$, $\mathrm{SE}{=}.58$, $z{=}1.01$, $p{=}.31$).

\textbf{Effort and response time.}
Across $3{,}270$ valid deliberation intervals, median RT is nearly identical for \texttt{coarse}, \texttt{medium}, and \texttt{fine} choices ($7.6$/$7.6$/$7.3$\,s). The \texttt{fine}-\texttt{medium} contrast on log-RT, computed within participants by mean-centering each participant's log-RTs before comparison, shows no reliable difference ($\beta{=}-0.003$, $p{=}.94$). Entered into the criterion comparison of \cref{app:human-methods}, a participant-residualized RT score adds no held-out fit ($-0.16$ $[-0.66,+0.39]$) and leaves the $\repemp$ increment essentially unchanged when the two are fit jointly ($+12.16$). These tests do not support effort as the source of the granularity pattern.

\subsection{Criterion comparison}\label{app:human-methods}

\textbf{Belief-graph reconstruction.}
For each participant, we reconstruct the belief graph $M_t$ online from that participant's observations. The prespecified update uses detection threshold $\theta{=}0$, one confirmation per edge, and attributes all visible effects to the intervened variable. No scoring rule observes the ground-truth graph. We vary these assumptions in the specification curve below.

\textbf{Scoring the granularity decision.}\label{app:human-action}
We score the granularity and intervention decisions separately because they have different action spaces. For granularity, $\repemp$ is the grounded channel of \cref{eq:goal-proj-repemp} on the fixed probe-goal set,
\[
  \repemp_\ell(\ell)=I\!\left(U;\Gamma_{\Gach}(\ell)\mid M_t\right).
\]
A candidate intervention $U$ achieves a goal when its predicted outcome under $M_t$ lands within the goal's bin width; only goals expressible at $\ell$ contribute. Thus, additional resolution cannot increase the score beyond the task's goal set. $\envemp_\ell$ instead counts distinguishable commandable states without projecting onto goals,
\[
  \envemp_\ell(\ell)=\log_2\!\sum_u B(\ell)^{|\mathrm{reach}(u;M_t)|},
\]
where $B(\ell)\in\{2,4,8\}$. The InfoGain rule scores the probability of detecting unresolved structure times its remaining posterior entropy.

\textbf{Scoring the intervention decision.}
$\envemp_{(v,p)}$ scores expected downstream reach from $(v,p)$, including prior-weighted reach through untested edges. $\repemp_{(v,p)}$ scores the expected increase in plannable goals if the intervention is tested. InfoGain sums the entropy of unknown outgoing edges.

\textbf{Control and fitting.}
The hierarchical control contains group-level choice preferences, shrunken participant deviations, previous-choice stickiness, and, for intervention choice, intervention familiarity. It gains $+928$ nats over uniform for granularity and $+475$ for intervention choice. Candidate utilities are therefore evaluated only by the incremental fit they add to this control. Each utility enters a regularized softmax ($\ell_2{=}2$) with one population coefficient and is fit by leave-one-task-out cross-validation. We report held-out $\Delta$LL beyond this control (\cref{fig:human_results}c).

\textbf{Specification curve.}\label{app:human-spec-curve}
Across $12$ belief-update specifications (detection threshold $\times$ edge confirmations $\times$ effect attribution), fixed-goal $\repemp_\ell$ is the strongest granularity rule in a plurality of cells, but its median increment is near zero and no positive interval persists across the curve. Rules whose goal set grows with resolution are uniformly negative. The $\envemp$ intervention point estimate is stable to a $16\times$ range of control shrinkage and a time-in-task control, but remains inferentially inconclusive.

\section{Experiment~2 Supplementary Results}\label{app:simulation}

\subsection{Simulation protocol}\label{app:simulation-protocol}

The graph and curriculum for each seed are specified in \cref{sec:env-sim}. Each task permits $1{,}000$ interventions ($10$ episodes of $100$ steps) and the agent carries its beliefs across task boundaries. The agent uses the same Bayesian belief tracker in every condition; an edge is counted as learned when its posterior exceeds $.5$. Each condition pairs a granularity rule with an intervention rule, drawn from $\repemp$, $\envemp$, InfoGain, and uniform random; the Factored condition pairs $\repemp$ granularity with $\envemp$ intervention selection. Observation noise is parameterized as $(\sigma_{\texttt{coarse}},\sigma_{\texttt{medium}},\sigma_{\texttt{fine}})$. We evaluate flat $(0.5,0.5,0.5)$, R-\texttt{coarse} $(0.5,2.0,3.0)$, R-\texttt{medium} $(2.0,0.5,3.0)$, and R-\texttt{fine} $(2.0,2.0,0.5)$. R \texttt{medium} is the default regime in the main comparison.

At each logged step, we compare the belief graph with the ground-truth graph using F1 and normalized SHD. The ground-truth edge set is projected through the condition's discretization $\phi_\ell$; we never up-project a learned graph to unsupported detail. Conditions, seeds, and tasks are fully crossed, yielding $150$ paired seed--task instances per split. Contrasts use paired $t$-tests with Bonferroni correction ($\alpha{=}.05/9$); uniform random is the reference floor.

\subsection{Granularity selection across noise regimes}\label{app:sim-identifiability}

We validate each regime without consulting an adaptive selection rule. Structural identifiability measures the fraction of true edges for which an intervention produces an observational distribution distinguishable from the no-edge distribution at the available per-intervention budget. As a behavioral cross-check, fixed-granularity agents select interventions uniformly. The two criteria agree on the graph-recovery optimum in each regime (\cref{tab:app-sim-identifiability}). This validation concerns graph recovery; it does not guarantee agreement with $\repemp$, which scores commandable goal outcomes.

\begin{table}[t]
\centering
\scriptsize
\caption{Regime validation. Bold entries identify the best granularity under structural identifiability and fixed-policy graph recovery.}
\label{tab:app-sim-identifiability}
\begin{tabular}{lccc c ccc}
\toprule
& \multicolumn{3}{c}{Identifiable edges (\%)} & & \multicolumn{3}{c}{Fixed-granularity F1} \\
\cmidrule(lr){2-4}\cmidrule(lr){6-8}
Regime & \texttt{coarse} & \texttt{medium} & \texttt{fine} & & \texttt{coarse} & \texttt{medium} & \texttt{fine} \\
\midrule
flat & 96.0 & 99.7 & \textbf{100.0} & & .520 & .708 & \textbf{.753} \\
R-\texttt{coarse} & \textbf{96.0} & 67.2 & 19.5 & & \textbf{.520} & .370 & .185 \\
R-\texttt{medium} & 49.4 & \textbf{99.7} & 19.5 & & .343 & \textbf{.708} & .185 \\
R-\texttt{fine} & 49.4 & 67.2 & \textbf{100.0} & & .343 & .370 & \textbf{.753} \\
\bottomrule
\end{tabular}
\end{table}

\begin{table}[t]
\centering
\scriptsize
\caption{Share of \texttt{coarse}/\texttt{medium}/\texttt{fine} decisions by rule and regime; the modal level is bold. The final row gives the independently validated graph-recovery optimum.}
\label{tab:app-sim-tracking}
\begin{tabular}{lcccc}
\toprule
Rule & flat & R-\texttt{coarse} & R-\texttt{medium} & R-\texttt{fine} \\
\midrule
$\repemp$ & .02/.02/\textbf{.97} & .02/.08/\textbf{.91} & .02/\textbf{.63}/.35 & .02/.02/\textbf{.97} \\
$\envemp$ & .02/.02/\textbf{.97} & .02/.02/\textbf{.97} & .02/.02/\textbf{.97} & .02/.02/\textbf{.97} \\
InfoGain & .34/.33/.33 & .34/.33/.33 & .34/.33/.33 & .33/.33/.33 \\
\midrule
Graph-recovery optimum & \texttt{fine} & \texttt{coarse} & \texttt{medium} & \texttt{fine} \\
\bottomrule
\end{tabular}
\end{table}

\Cref{tab:app-sim-tracking} reports the share of decisions each rule spends at each granularity. The diagnostic positive result is the R-\texttt{medium} shift, where $\repemp$ moves from $97\%$ \texttt{fine} in the flat regime to $63\%$ \texttt{medium}, then returns to $97\%$ \texttt{fine} in R-\texttt{fine}. $\envemp$ remains fine-preferring, and InfoGain remains approximately uniform.

The shipped R-\texttt{coarse} regime is a negative result. Although fixed-policy graph recovery is highest at \texttt{coarse}, $\repemp$ selects \texttt{fine} on $91\%$ of decisions and obtains $.139$ held-out F1, the divergence between graph recovery and commandable goal outcomes noted above. Its analytic preference for \texttt{coarse} requires $\sigma_{\texttt{medium}}>3.52$ when $\sigma_{\texttt{coarse}}{=}0.5$; the shipped value is $2.0$.

Post hoc sweeps locate, but do not remove, this boundary. Raising $\sigma_{\texttt{medium}}$ to $4.02$ increases the full-budget coarse share only to $23\%$. At the same noise and a budget of $20$, the coarse share reaches $59\%$ (modal in $20/30$ seeds), but F1 falls to $.176$, versus $.326$ at the full budget. We treat these sweeps as exploratory diagnostics, not evidence that $\repemp$ robustly tracks coarse-optimal graph-recovery regimes.

\section{Grid World Environments and Curator-Actor Architecture}\label{app:model-construction}

\begin{algorithm}[t]
\caption{The Curator-Actor loop for continual model construction}
\label{alg:cmc}
\small
\begin{algorithmic}[1]
\STATE \textbf{Input:} environment $e$, library $\Lib$, proposal
vocabulary $\Cand$, budgets $B_{\mathrm{task}}, B_{\mathrm{lib}}$
\STATE $\mathcal{D}\leftarrow\varnothing$;\quad
$M_0\leftarrow\mathcal{C}.\textsc{Seed}(\Lib,e)$
\hfill {\footnotesize\emph{// initial task model from $\Lib$}}
\FOR{$t=1,\ldots,B_{\mathrm{task}}$}
    \STATE $(M_t,\mathcal{G}_t,\{\pi_g\}_{g\in\mathcal{G}_t})\leftarrow
    \mathcal{C}.\textsc{Propose}(\Lib,o_e,\mathcal{D};\,M_{t-1},\Cand)$
    \hfill {\footnotesize\emph{// task model, probe goals, symbolic plans}}
    \FOR{each $g\in\mathcal{G}_t$}
        \STATE $(\tau_g,\Gamma_e(M_t)_g)\leftarrow
        \mathcal{A}.\textsc{Execute}(M_t,g,\pi_g)$
        \hfill {\footnotesize\emph{// primitive grounding only}}
    \ENDFOR
    \STATE $\widehat V_t(\Delta M_t)\leftarrow
    \mathcal{C}.\textsc{Score}\bigl(\Gamma_e(M_t),\Gamma_e(M_{t-1});
    \mathcal{G}_t\bigr)$,
    where $\Delta M_t = M_t\setminus M_{t-1}$
    \hfill {\footnotesize\emph{// \cref{eq:expected-action-value}, summed over $\sigma\in\Delta M_t$}}
    \STATE \textbf{if} $\widehat V_t(\Delta M_t)>0$ \textbf{then} accept $M_t$,
    append $\{\tau_g\}$ to $\mathcal{D}$;
    \textbf{else} $M_t\leftarrow M_{t-1}$ and reject the candidate
    elements (quarantine)
    \IF{$\Gamma_e(M_t)$ has plateaued on the probe set}
        \STATE Backward-prune any element of $M_t$ whose removal
        leaves $\Gamma_e(M_t)$ unchanged
    \ENDIF
\ENDFOR
\STATE Re-evaluate each retained element's $\widehat V_t$ on the
cross-environment probe buffer (regime-appropriate estimator)
\STATE $\Lib\leftarrow
\mathcal{C}.\textsc{GreedyCurate}\bigl(\Lib\cup M_t,\,
B_{\mathrm{lib}}\bigr)$
\hfill {\footnotesize\emph{// greedy marginal-coverage update}}
\end{algorithmic}
\end{algorithm}

This section documents the open-vocabulary experiments referenced in \cref{sec:exp-construction}: the Curator-Actor architecture, the two environment families (BabyAI and Zelda), and the training and evaluation protocols. The implementation accompanying these experiments will be released with the paper.

\subsection{Curator-Actor architecture}\label{app:curator-system}

\textbf{Library and task model.}
The persistent library $\Lib$ is a typed PDDL knowledge store with types, type-hierarchy edges, predicates, and operators. Types are atomic symbols (e.g., \texttt{agent}, \texttt{key}, \texttt{door}, \texttt{object}) and type-hierarchy edges record subtype relations (\texttt{key - object}, \texttt{door - object}). Predicates carry a PDDL signature and a Python grounding closure that evaluates the predicate on raw observations:

{
\small
\begin{verbatim}
(:predicates
  (holding ?a - agent ?o - object)
  (next_to ?a - agent ?o - object)
  (locked  ?d - door)
  (is_key_for ?k - key ?d - door))
\end{verbatim}
}

Operators are PDDL action schemas with typed preconditions and effects:

{
\small
\begin{verbatim}
(:action unlock_door
  :parameters (?a - agent ?k - key ?d - door)
  :precondition (and (holding ?a ?k)
                     (next_to ?a ?d)
                     (locked  ?d)
                     (is_key_for ?k ?d))
  :effect       (and (not (locked ?d))))
\end{verbatim}
}

Every element carries a status flag in $\{\texttt{candidate},\texttt{trusted}\}$. The task-local model $M_t$ is the PDDL domain assembled from the \texttt{trusted} subset of $\Lib$ relevant to the current environment, plus any \texttt{candidate} elements currently under evaluation, plus the grounding code needed to evaluate predicates on raw observations.

\textbf{Curator.}\label{app:curator-scoring}
The Curator~$\mathcal{C}$ is the LLM-backed component. It runs the following loop:
\begin{enumerate}[leftmargin=1.5em,itemsep=0.1em]
    \item \emph{Propose.} From the experience buffer $\mathcal{D}_t$, the current $M_t$, and $\Lib$, an LLM proposer emits candidate types, predicates (with Python grounding code), and operator schemas. The proposer also revises existing elements (refine signature, split, compose) and emits top-environment goal predicates that are expressible under $M_t\cup\Cand_t$ but not currently achievable under $M_t$. 
    \item \emph{Probe.}  For each candidate $a$, $\mathcal{C}$ constructs a probe goal set $\mathcal{G}_t(a)$: goals blocked under $M_t$ but expressible under $M_t\!\oplus\!a$ (positive probes) and goals already solvable that act as regression checks (negative probes). For each $g \in \mathcal{G}_t(a)$, $\mathcal{C}$ compiles $g$ into a PDDL problem against $M_t$ and calls Fast Downward~\citep{helmert2006fast}. When the PDDL compile fails (e.g., a candidate operator has malformed typing) or the external solver times out, $\mathcal{C}$ falls back to a fuzzy beam-search planner that operates directly on the operator schemas in $M_t$. In both cases the output is an operator sequence $\pi_g^{M_t} = (a_1,\dots,a_k)$, which is handed to the Actor along with $g$.
    \item \emph{Score.} Candidates $a$ are evaluated with \cref{eq:expected-action-value}. The Actor estimates the change in achievement probability on the witness set $W_t(a)$; candidate validity weights this gain, and the representational penalty determines whether the marginal benefit is large enough for promotion. To save Actor calls, candidates are first pre-ranked by a cheap surrogate computed from $\Lib$, $\mathcal{D}_t$, and the candidate's typed signature. Only top-ranked candidates are executed; the resulting signatures $\Gamma_e(M_t\!\oplus\!a)_g$ and $\Gamma_e(M_t)_g$ determine the promotion score.
    \item \emph{Curate.}  Candidates that meet minimum thresholds on $\widehat V_t$, on reliability (success rate across probe attempts), and on a minimum test count are promoted to \texttt{trusted}.  At the cross-environment refactor stage described in \cref{sec:method}, the same $\widehat V_t$ is re-evaluated against the cross-environment probe buffer---which samples a fixed budget of imagined state-goal pairs from the buffer and scores reachability under $\Lib$ vs.\ $\Lib\cup\{\sigma\}$---and the trusted set is greedily re-ranked under the library budget by marginal $\widehat V_t$.
\end{enumerate}

Between curriculum environments, the final library $\Lib$ from the previous environment is carried forward unchanged and seeds $M_t$ on the next environment.

\textbf{Actor.}\label{app:actor-local-wm}
The Actor receives a $(M_t, g, \pi_g^{M_t})$ triple from the Curator and returns evidence about $\Gamma_e(M_t)_g$. No LLM is involved at plan-and-execute time, and the Actor never plans symbolically: its job is to realize the operator sequence $\pi_g^{M_t} = (a_1,\dots,a_k)$ in primitive actions and report whether each $a_i$ landed its effects.
Each operator $a_i$ is realized as a primitive sequence by heuristic search over the agent's primitive action set.  The search uses an agent-egocentric local world model: a small representation $(\mathrm{patch},\mathrm{carried\_slot})$ of the agent's neighborhood, where $\mathrm{patch}$ is a fixed-radius window of cells rotated so the agent heading is up, and $\mathrm{carried\_slot}$ encodes the carried object outside the cell grid.  Predicates that bind to in-window objects, plus agent-indexed predicates (e.g.\ \texttt{holding}, \texttt{facing}), are projected into the local model on demand by running their grounding closures.
The search heuristic counts unmet operator preconditions under the projected predicates, a goal-count relaxation in the spirit of classical planning heuristics; we make no admissibility claim, and the environment verifies the resulting predicate flips. The local model is learned online: a self-supervised predictor maps $(\mathrm{patch},\mathrm{carried\_slot},\mathrm{primitive})$ to the local predicate-flip set and is updated from the Actor's primitive trajectories. Calls into the live environment during search are amortized by rollouts in this learned model.

\subsection{Environments}

\textbf{BabyAI.}
We use four BabyAI environments for training: \texttt{GoToLocal}, \texttt{Pickup}, \texttt{OpenRedDoor}, and \texttt{UnlockLocal}. All four share the same primitive action space (turn, move, pickup, drop, toggle). The ordering is intended to give the Curator opportunities to compose elements promoted at earlier environments (e.g., navigation routines reused for key acquisition before unlocking).
A held-out split of four environments, never seen during training, tests transfer to configurations whose goal templates differ from the training environments: \texttt{PutNextLocal} introduces spatial object--object relations; \texttt{MoveTwoAcrossS5N2} requires multi-object manipulation; \texttt{PickupAbove} changes the relative orientation of agent and target; and \texttt{Unlock} is a variant of \texttt{UnlockLocal} with different layout and object constraints. \cref{fig:babyai-envs} shows representative screenshots.

\begin{figure}[t]
\centering
    \includegraphics[width=0.8\textwidth]{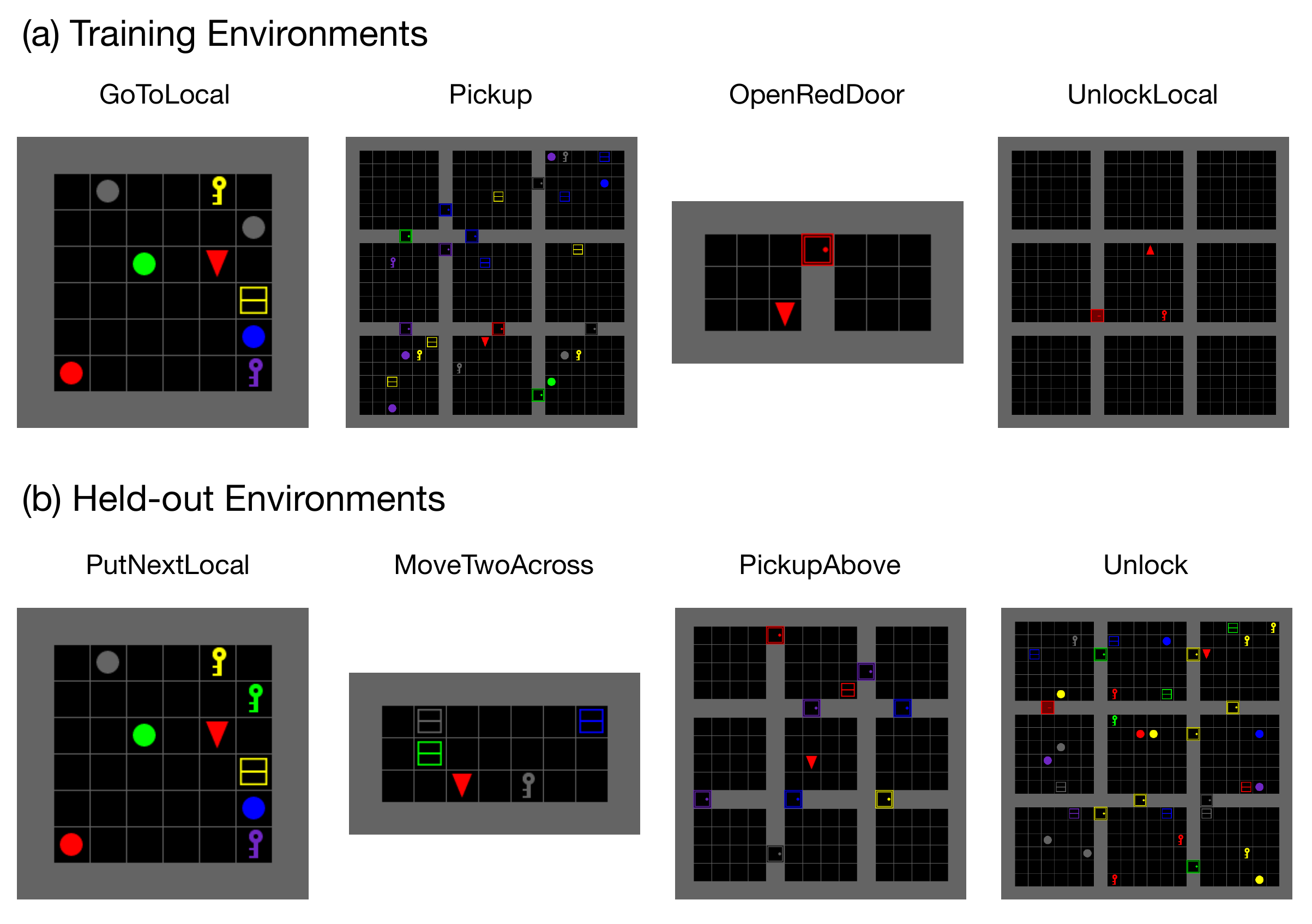}
    \caption{\textbf{BabyAI training and held-out environments used in Exp.~3.} Training environments are ordered by minimum operator-chain length; held-out environments combine those operators in configurations not seen during training.}
\label{fig:babyai-envs}
\end{figure}

\textbf{Zelda (EMPA).}
We use the \emph{Zelda} game from the EMPA suite~\citep{tsividis2021human}, which provides typed objects---player, monster, key, door, wall---and a native reward function. The first three environments are used for training and the last two are held out for evaluation (\cref{fig:zelda-envs}). Held-out environments share the primitive dynamics of training but introduce new layouts and precondition configurations, testing whether predicates and operators promoted during training (e.g., key acquisition, monster avoidance) transfer to settings where the same goals must be reached through different spatial arrangements.

\begin{figure}[h]
\centering
    \includegraphics[width=0.8\textwidth]{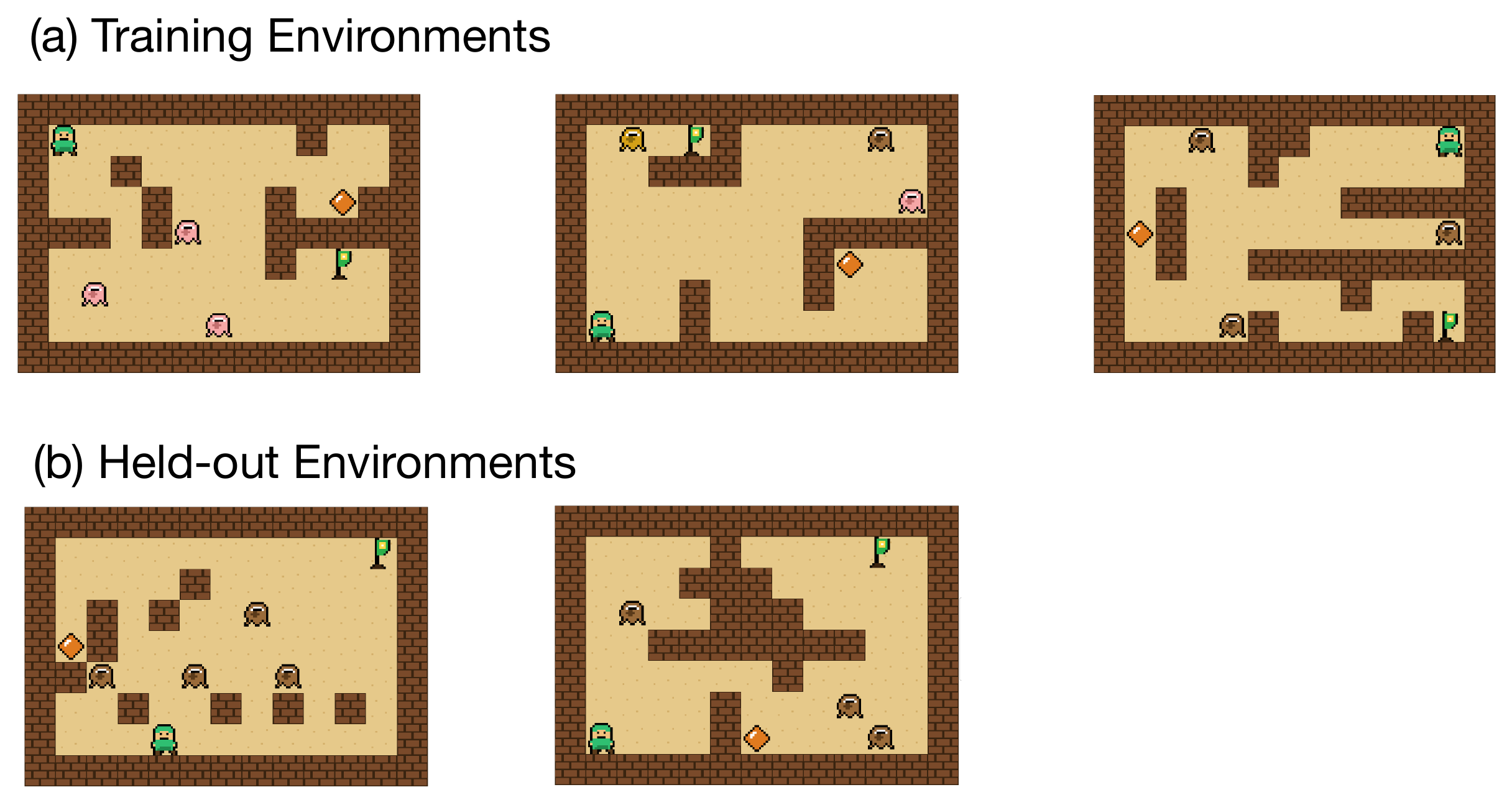}
    \caption{\textbf{Zelda training and held-out environments used in Exp.~3.}  The first three environments are used for training and the last two for held-out evaluation; primitive dynamics are identical across environments but layouts and precondition configurations differ.}
    \label{fig:zelda-envs}
\end{figure}

\subsection{Training and evaluation}\label{app:curator-protocol}

\textbf{Training budget.}
Within an environment, the Curator-Actor loop runs until either the candidate set stabilizes---no new promotions for $K_{\mathrm{stab}}{=}3$ consecutive cycles---or the per-environment LLM-call budget is exhausted. We allocate $200$ LLM calls per environment on BabyAI and $400$ on Zelda, matching WorldCoder's budget on each family for direct comparison. Motif's pipeline issues an LLM call for every labeled trajectory pair and consequently uses $\sim 5{,}000$ calls per environment; this is a structural cost of preference labeling, not a compute advantage. PPO has no LLM calls and is matched to Motif on environment-step budget instead.

Crucially, the LLM-call budget translates into a small environment-step footprint. Curator and WorldCoder each consume on the order of thousands of primitive environment steps per environment---two orders of magnitude below the $\sim 500\mathrm{k}$ and $\sim 1\mathrm{m}$ steps used by Motif and by PPO---because each LLM call drives at most a handful of probe-goal executions rather than continuous policy rollout. We treat this gap as a property of symbolic methods, but flag it explicitly so the transfer comparisons are not read as unequal-compute results.
Immediately after an environment finishes training, we record a \emph{self-evaluation} score on the same environment using the then-current $\Lib$; this score anchors the backward-transfer comparison defined below.

All LLM-backed components (Curator's proposer, scorer, and curator; WorldCoder's synthesizer; Motif's labeler) use the same model: \texttt{deepseek-r1:70b} served locally via Ollama on a single A100 GPU.
We report mean and standard deviation across $3$ seeds per condition, with seeds controlling LLM sampling and environment stochasticity. Within a seed, evaluation is averaged over $N_{\mathrm{eval}}=50$ rollouts per (environment, stage) cell on BabyAI and Zelda. The success metric, $\mathrm{reward\_success}$, is the fraction of evaluation rollouts in which the agent reaches the environment's primary goal as defined by the environment's native reward function. A rollout terminates on goal achievement, agent death (Zelda), or step-limit ($T_{\mathrm{max}} =256$ primitive steps).

\textbf{Promotion criteria.}
A candidate element $\sigma$ is promoted to \texttt{trusted} when it has reliability at least $.70$, has been dispatched on at least three probes, and its unpenalized marginal coverage gain exceeds $.50$. With $c_\sigma\equiv1$ and $\lambda=.50$, the final condition is equivalent to a positive penalized score in \cref{eq:expected-action-value}.
At the environment boundary, $\widehat V_t$ is re-evaluated against a cross-environment probe buffer using the \texttt{imagined\_goal} estimator. We sample $N_{\mathrm{imag}}=50$ state-goal pairs uniformly from the buffer, plan against $\Lib$ and $\Lib \cup \{\sigma\}$ for each, and set $\sigma$'s contribution equal to the number of newly reachable goals. 

\textbf{Forward transfer.}
For each adjacent pair $(\mathrm{env}_i, \mathrm{env}_{i+1})$ in the training curriculum, we freeze $\Lib$ at the end of $\mathrm{env}_i$ and evaluate it on $\mathrm{env}_{i+1}$ \emph{without further library updates}: the agent receives no training trajectories on $\mathrm{env}_{i+1}$ and the Curator does not propose new candidates. The local world model (\cref{app:curator-system}) continues to update from primitive trajectories, since this update happens at no extra environment budget. This isolates the short-range generalization of promoted symbols.

\textbf{Backward transfer.}
After the full curriculum has been trained, we re-evaluate the final $\Lib$ on each earlier training environment under the same frozen-library protocol as forward transfer. Backward transfer is the difference between an environment's success rate immediately after it was trained and its success rate after all subsequent training has finished:
\[
\mathrm{Backward}_i \;=\; \text{final-eval}(i) - \text{self-eval}(i),
\]
where $\text{self-eval}(i)$ is the score recorded immediately after $\mathrm{env}_i$ was trained and $\text{final-eval}(i)$ is the score under the final $\Lib$. Negative values indicate forgetting; positive values indicate retroactive improvement, where abstractions promoted at later environments make earlier environments easier. Rather than collapse this to a scalar, \cref{fig:curator-backward-babyai,fig:curator-backward-zelda} plot the full success trajectory $\text{eval}(i, j)$ at every evaluation stage $j \ge i$, since the shape of the trajectory distinguishes monotonic accumulation (Curator) from peak-and-decay (WorldCoder, PPO).

\textbf{Held-out transfer.}
For each environment family we hold out a disjoint set of environments not used during training. The frozen final $\Lib$ is evaluated zero-shot on each held-out environment under the same protocol as forward and backward transfer: no training trajectories, no candidate proposals, only library lookup and Actor execution. This tests compositional transfer to goal templates and configurations whose specific combinations did not appear in training.

\subsection{Prompt templates}\label{app:prompts}

The Curator-Actor pipeline issues two LLM calls per cycle to construct $M_t$, plus one call to propose a probe problem. 
Each Curator cycle first calls \emph{Discover} to propose new actions, predicates, and fixes from recent trajectories, then calls \emph{Formalize} to turn those proposals into typed PDDL schemas and Python grounding code. The split mirrors the conceptual distinction between \emph{what to propose} (a generative step that draws on both observed trajectories and prior knowledge) and \emph{how to express it formally} (a compilation step that enforces syntactic and type constraints).
We show the domain-agnostic templates below; \texttt{\{\{...\}\}} marks runtime substitutions, and the full prompts (with environment-specific scaffolding for BabyAI position tuples, Zelda direction conventions, etc.) are released with the code.

\begin{promptbox}[Construct stage 1 --- Discover]
\textbf{System.}

You are an expert in symbolic abstraction and world-model learning. Operate at a high level of abstraction: read low-level observations but lift them into abstract symbolic descriptions suitable for task-level planning. You may draw on both observed evidence and prior knowledge of the domain; both are valid.

\medskip
\textbf{User.}

Analyze the interaction experiences and propose high-level symbolic structure for task-level planning.

\textsc{Existing domain}

\textsc{Experiences:}\quad \promptvar{trajectories}

\textsc{Environment metadata:}\quad \promptvar{env\_metadata}

\medskip
\textsc{Task.}
\begin{enumerate}
\item Identify high-level patterns in state transitions.
\item Propose new operators, predicates and types.
\item If an existing symbol almost fits, propose a fix rather than a new symbol.
\end{enumerate}

\medskip
\textsc{Output} (strict JSON):
\begin{verbatim}
{
  "observed_patterns": [{"description": "...", "confidence": 0..1}],
  "proposed_actions": [
    {"name": "...", "parameters": [...],
     "preconditions_sketch": "...", "effects_sketch": "...",
     "evidence": {"type": "experience-driven"|"gap-driven", ...},
     "justification": "..."}
  ],
  "proposed_predicates": [{"name": "...", "meaning": "...", ...}],
  "fix_proposals":      [{"name": "...", "current_issue": "...",
                          "suggested_fix": "..."}]
}
\end{verbatim}
\end{promptbox}

\begin{promptbox}[Construct stage 2 --- Formalize]
\textbf{System.}

You are a compiler, not a generator: stay faithful to what the patterns specify. Your job is to convert discovered patterns into canonical structured artifacts---typed predicate definitions, action schemas with declared parameters, and Python grounding functions. 

\medskip
\textbf{User.}

\textsc{Current world model} (status-annotated as in stage~1):\quad \promptvar{current\_domain}

\textsc{Discovered patterns} (compile these faithfully):\quad \promptvar{patterns\_from\_stage\_1}

\medskip
\textsc{Task.}
\begin{enumerate}
\item Formalize each entry in \texttt{proposed\_actions} as a structured action schema with typed parameters, precondition, and effect.
\item Formalize each entry in \texttt{proposed\_predicates} as a typed predicate definition.
\item Apply each entry in \texttt{fix\_proposals} as a corrected definition with \texttt{intent: "fix"}.
\item Generate a Python grounding function for each new or fixed predicate.
\end{enumerate}

\medskip
\textsc{Output} (strict JSON):
\begin{verbatim}
{
  "new_predicates": [
    {"name": "...", "parameters": [...], "meaning": "...",
     "intent": "new"|"fix"}
  ],
  "new_actions": [
    {"name": "...", "parameters": [...],
     "precondition": <expr>, "effect": <expr>,
     "intent": "new"|"fix"}
  ],
  "grounding_code": "Python source string"
}
\end{verbatim}
\end{promptbox}

\begin{promptbox}[Propose-problem prompt]
\textbf{System.}

You are an expert in designing problems for an autonomous agent that builds and validates symbolic world models (PDDL domains).

Generate a single PDDL problem that serves the agent's current strategic intent. A problem can serve any of the following purposes simultaneously:
\begin{itemize}
\item \textbf{Validate} a hypothesis about a predicate or action (diagnostic / counterfactual).
\item \textbf{Build} a skill (navigation, manipulation, etc.).
\item \textbf{Discover} new symbolic structure.
\end{itemize}

\medskip
\textbf{User.}

\textsc{Strategic intent:}\quad \promptvar{strategic\_intent}

\textsc{Focus symbols} (exercise these):\quad \promptvar{focus\_symbols}

\textsc{Current domain:}\quad \promptvar{current\_domain\_pddl}

\textsc{Environment objects} (pick from this list):\quad \promptvar{available\_objects}

\medskip
\textsc{Guidelines.}
\begin{itemize}
\item Use only object names from the environment list and only values from typed constants. Do not invent names.
\item Include only objects participating in the goal.
\item Express \texttt{goal} as a logical formula over predicates already declared in the domain, using \texttt{atom}, \texttt{and}, \texttt{or}, \texttt{not}, or \texttt{seq} nodes.
\end{itemize}

\medskip
\textsc{Output} (strict JSON):
\begin{verbatim}
{
  "goal_name":   "...",
  "goal_description": "...",
  "why_useful":  "why this problem helps the strategic intent",
  "objects":     [{"name": "...", "type": "..."}],
  "init":        [],
  "goal":        <goal_expr>,
  "focus_symbols_exercised": [...],
  "success_criteria": "..."
}
\end{verbatim}
\end{promptbox}

\subsection{Baseline}\label{app:baselines}

\textbf{PPO.}
We use Proximal Policy Optimization~\citep{schulman2017proximal} with intrinsic motivation as a model-free continual RL baseline.
The agent is trained with both extrinsic task reward and intrinsic novelty bonus from Random Network Distillation~\citep[RND;][]{burda2018exploration}, which rewards visiting states whose features the predictor network cannot yet match. For intrinsic reward selection, to make a fair comparison, we varied three conditions: no intrinsic reward, count-based and RND-based intrinsic rewards. We showed the results of RND-based intrinsic rewards as it performs best across three conditions for PPO. 

The policy network is a three-layer CNN (channels 16-32-64, kernel size 2) followed by an \texttt{AdaptiveAvgPool2d}$(4\times4)$ that produces a fixed 1024-dimensional feature regardless of input resolution, then a 64-unit fully connected layer feeding separate actor and critic heads.
For both BabyAI and EMPA, the observation is the symbolic full-observation grid (object type, color index, state), with per-channel normalisation matching the integer ranges of each field.
All PPO hyperparameters follow standard settings: clipped surrogate objective ($\epsilon=.2$), GAE advantage estimation ($\gamma=.99$, $\lambda=.95$), 4 optimisation epochs per rollout batch, and linear learning-rate annealing within each environment phase.

\textbf{Motif.}
Motif~\citep{klissarov2023motif} constructs an intrinsic reward by eliciting trajectory preferences from an LLM and training a reward model on those preferences.
The pipeline runs four steps per curriculum environment and follows the same sequential continual-learning structure as PPO:
\begin{enumerate}[leftmargin=1.5em,itemsep=0.1em]
    \item \emph{Collect.}  A fresh RND agent (independent of the Motif policy) explores the current environment and saves the resulting episodes.
    \item \emph{Label.} $5000$ trajectory pairs are sampled from those episodes. Each pair is converted to a text caption---listing the objects the agent sees, their distances, and the chosen action at each step---and sent to an LLM.  The LLM returns a binary preference ($A > B$, $B > A$, or $A = B$) without access to the mission string.
    \item \emph{Train reward model.}  An MLP reward model (3-layer CNN encoder matching the \texttt{IntrinsicAgent} backbone, 512-unit hidden layer, scalar output head) is trained from scratch on the labeled pairs using a Bradley-Terry preference loss.
    \item \emph{RL with intrinsic reward.}  The \texttt{IntrinsicAgent}---a 3-layer CNN (32-64-64 channels, $3\times3$ kernels) with \texttt{AdaptiveAvgPool2d}$(4\times4)$ and 512-unit FC layer, feeding separate extrinsic and intrinsic critic heads---is trained with PPO.  The reward model's scalar output is used as the intrinsic bonus, together with extrinsic task reward, for the training signal (mirroring PPO's intrinsic-only setup).
\end{enumerate}

\textbf{Why Motif fails in the autotelic setting.}
Motif was originally demonstrated on Crafter, where the LLM can judge behavioral quality from goal-agnostic signals such as health loss, resource gain, and crafting events without knowing the specific task.
Our setting withholds the mission during training, since an agent that generalizes across open-ended task distributions must not be hard-wired to any particular goal~\citep{colas2022autotelic}, and knowledge acquired through task-specific reward shaping is goal-indexed rather than compositional.
Motif's labeler therefore has no basis for ranking one trajectory above another, because every primitive action---turn left, pick up, do nothing---is potentially correct depending on a goal it cannot see.
Its reward model achieves $50.2\%$ preference-prediction accuracy on a held-out validation split, indistinguishable from chance, so the intrinsic bonus it supplies carries no learning signal.

Inspecting the labeled pairs reveals the failure mode concretely. For example, in the \texttt{GoToLocal} phase, both segments share an \emph{identical} observation; only the action differs:

\begin{quote}\small
\textbf{Segment A:} \textit{You see a yellow ball 3 steps left and 3 steps forward. You see a blue box 3 steps forward. You see a purple key 2 steps right and 2 steps forward. $[\ldots]$
\textbf{Action: do nothing.}}

\textbf{Segment B:} [identical observation] \textbf{Action: pick up the front object.}

\smallskip
\textbf{LLM:} \textit{``In Segment B, the agent attempts to pick up an object, which isn't necessary for a \textbf{goto task} and thus doesn't help. However, doing nothing (A) is preferable as it avoids an irrelevant action.''} $\;\Rightarrow\;$ \textbf{[Rank] A $>$ B}
\end{quote}

The LLM hallucinates this is a \emph{goto} task. If the actual mission were ``pick up the grey key,'' Segment B is the correct behaviour and the label is backwards.
Across the full labeled dataset, 78.8\% of LLM responses invoke task-specific language (``goal,'' ``goto task,'' ``mission'') despite having no access to the mission string, indicating that the model compensates for missing context by hallucinating task assumptions rather than reasoning from first principles.
The only genuinely task-agnostic signal it reliably extracts---detecting wall collisions---appears in 77.5\% of responses, but such events are too sparse to yield a meaningful reward gradient.
Preference labels over trajectories therefore cannot supply a task-agnostic training signal in this setting, because the behavior being ranked is better or worse only relative to a goal the labeler does not have.

\textbf{WorldCoder.}
We run WorldCoder~\citep{tang2024worldcoder} from its public release with no algorithmic changes. Our only modifications are (i) swapping the OpenAI backend for a local Ollama server serving \texttt{deepseek-r1:70b}, the same backbone used by the Curator and Motif, and (ii) wrapping the per-episode driver in a sequential-curriculum loop so that the step budget, evaluation cadence, and held-out splits are the ones used for PPO and Motif. WorldCoder reads the same symbolic observation the other methods receive rather than pixels, and plans over its own synthesized code using the environment's primitive action set, so no separate grounding step is needed. All remaining hyperparameters keep their released defaults.

The synthesized code, a transition function plus one reward function per mission string, is carried over verbatim between curriculum environments, and only the experience buffer and the \texttt{mission\_accomplished} set are cleared at the environment boundary. When a new environment introduces transitions inconsistent with the carried-over transition function, the LLM edits that file in place. WorldCoder has no library object that can prune or retire an element, so every prior code path persists unless the LLM happens to delete it during a repair. This is the mechanism behind the library-size comparison in \cref{sec:llm}.

\section{Experiment~3 Supplementary Results}\label{app:curator-results}

\begin{table}[t]
     \centering
     \caption{
        \textbf{Training-time LLM cost.}
        Total LLM usage during training for WorldCoder, the Curator ablation without empowerment-based scoring, and the full Curator. All LLM-based methods use \texttt{deepseek-r1:70b}. Counts are aggregated across random seeds and include LLM calls, prompt tokens, completion tokens, and total tokens. The full Curator reduces calls and total tokens relative to WorldCoder while achieving stronger transfer (\cref{tab:llm-combined}).
        }
     \label{tab:llm-cost}
     \scriptsize
     \begin{tabular}{llrrrr}
     \toprule
     Model & Env & LLM calls & Prompt tokens & Completion tokens & Total tokens \\
     \midrule
     WorldCoder & BabyAI & $814$ & $5{,}586{,}741$ & $910{,}037$ & $6{,}496{,}778$ \\
     Hoarder & BabyAI & $631$ & $4{,}827{,}384$ & $1{,}054{,}712$ & $5{,}882{,}096$ \\
     Curator & BabyAI & $608$ & $4{,}322{,}112$ & $822{,}196$ & $5{,}144{,}308$ \\
     \midrule
     WorldCoder & Zelda & $603$ & $3{,}586{,}278$ & $540{,}929$ & $4{,}127{,}207$ \\
     Hoarder & Zelda  & $368$ & $3{,}032{,}226$ & $604{,}643$ & $3{,}636{,}869$ \\
     Curator  & Zelda & $321$ & $2{,}485{,}482$ & $457{,}386$ & $2{,}942{,}868$ \\
     \bottomrule
     \end{tabular}
\end{table}

\subsection{Cross-stage transfer}\label{app:curator-backward}

\cref{fig:curator-backward-babyai} evaluates each method's carried representation at every training stage on every BabyAI environment. The dashed line in each panel marks the stage at which that environment finishes its own training, where values to the left are forward transfer (success on the environment before it has been trained, using the library accumulated from earlier environments) and values to the right are backward transfer (retention or retroactive improvement after later environments are added). Three patterns are worth noting.

On the hardest environment $\mathrm{env}_3$, the Curator reaches ${\sim}.30$ success before $\mathrm{env}_3$ training begins (at $+\mathrm{env}_2$), and finishes at ${\sim}.45$ after its own training stage. The library accumulated from $\mathrm{env}_0\text{-}\mathrm{env}_2$ already handles a substantial fraction of $\mathrm{env}_3$ zero-shot, because predicates and operators promoted at earlier environments (e.g., \texttt{navigate\_to}, \texttt{pickup}, \texttt{toggle}) compose into the longer action chains $\mathrm{env}_3$ requires. WorldCoder, by contrast, starts $\mathrm{env}_3$ near zero, because its mission-specific reward functions do not compose.

On environments $\mathrm{env}_0\text{-}\mathrm{env}_2$, the Curator preserves earlier-environment success after the dashed line and on $\mathrm{env}_2$ shows clear retroactive improvement, where predicates promoted at \texttt{UnlockLocal} (notably \texttt{unlocked(?door)}) tighten operator preconditions on \texttt{Pickup} and \texttt{OpenRedDoor}, so earlier environments become \emph{easier} as the curriculum progresses. WorldCoder and PPO show the textbook forgetting profile---peak at the matched training stage, then decay (e.g., on $\mathrm{env}_1$, WorldCoder peaks ${\sim}.69$ at $+\mathrm{env}_1$ and drops to ${\sim}.10$ by $+\mathrm{env}_3$ as later edits to the carried code break earlier transitions).

The empowerment-ablated Hoarder, which keeps every LLM-proposed element rather than curating, shows neither forward nor backward gains, tracking the Curator's curve at $\mathrm{env}_0$ briefly before flattening out and never recovering on later environments.

\cref{fig:curator-backward-zelda} shows the same patterns on Zelda. The Curator transfers forward into $\mathrm{env}_1$ (${\sim}.20$ before $\mathrm{env}_1$ training begins, climbing to ${\sim}.49$ at $+\mathrm{env}_1$ and staying at ${\sim}.43$ after $+\mathrm{env}_2$) and into $\mathrm{env}_2$, while WorldCoder peaks lower (${\sim}.36$ on $\mathrm{env}_1$) and decays. The Hoarder again fails on both axes.

\begin{figure}[t]
    \centering
    \includegraphics[width=\textwidth]{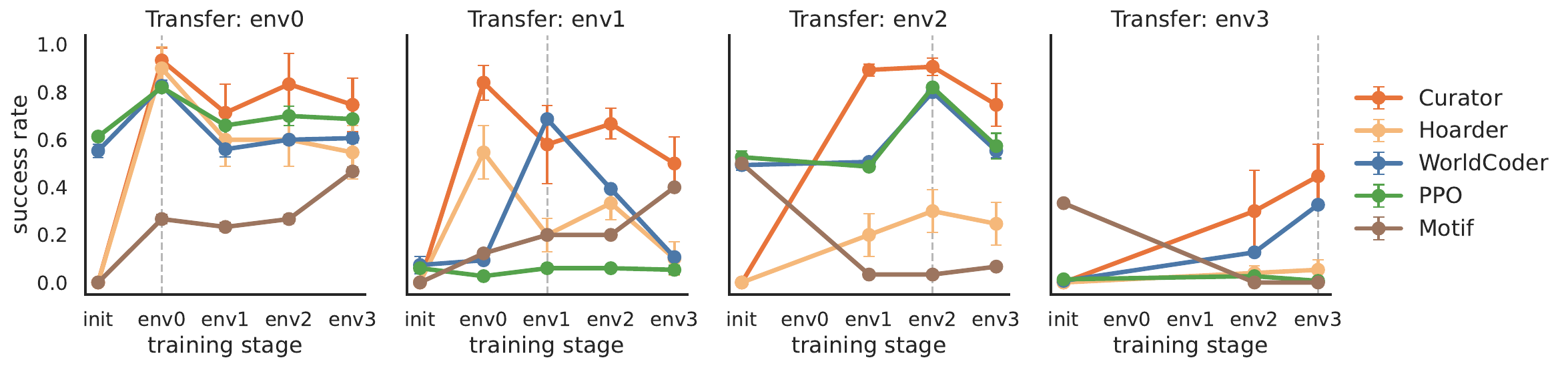}
    \caption{\textbf{BabyAI cross-stage transfer.} Each panel evaluates a fixed environment $\mathrm{env}_i$ as the curriculum proceeds. The dashed line marks the stage at which $\mathrm{env}_i$ finishes its own training; values to its left show forward transfer (success on $\mathrm{env}_i$ before it has been trained, achieved by the library accumulated from earlier environments), and values to its right show backward transfer (retention or retroactive improvement on $\mathrm{env}_i$ as later environments are added). The Curator transfers forward (e.g., $\mathrm{env}_3$ panel: $\sim .30$ at $+\mathrm{env}_2$ before $\mathrm{env}_3$ training begins) and backward (no decay after the dashed line; on $\mathrm{env}_3$, monotonic accumulation as $\mathrm{env}_3$ training itself proceeds). WorldCoder and PPO peak at each environment's matched stage and decay thereafter (the textbook forgetting profile). }
    \label{fig:curator-backward-babyai}
\end{figure}

\begin{figure}[t]
    \centering
    \includegraphics[width=0.76\textwidth]{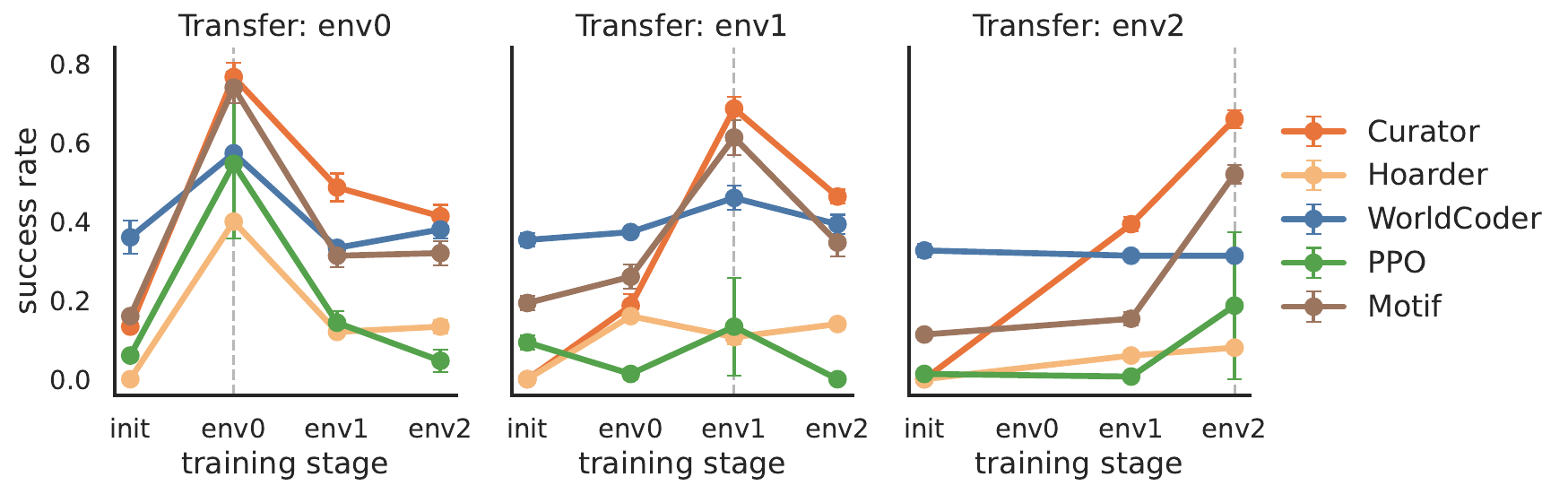}
    \caption{\textbf{Zelda cross-stage transfer.} Same protocol as \cref{fig:curator-backward-babyai}. The Curator transfers forward (visible on $\mathrm{env}_1$, $\mathrm{env}_2$ panels before each environment is trained) and preserves earlier-environment performance afterward, with retroactive improvement on $\mathrm{env}_1$. WorldCoder peaks lower than the Curator and decays; the Hoarder ablation again fails on both axes.}
    \label{fig:curator-backward-zelda}
\end{figure}

\subsection{Held-out transfer}\label{app:curator-heldout}

\cref{fig:curator-heldout-babyai,fig:curator-heldout-zelda} report zero-shot $\mathrm{reward\_success}$ of the frozen library $\Lib$ on configurations not seen during training. The frozen-library protocol is the same as forward and backward transfer (\cref{app:curator-protocol}): no training trajectories on the held-out environment, no candidate proposals, only library lookup and Actor execution.

On $\mathrm{env}_4$ (\texttt{PutNextLocal}), $\mathrm{env}_6$ (\texttt{PickupAbove}), and $\mathrm{env}_7$ (\texttt{Unlock} variant), the Curator reaches ${\sim}.45\text{-}.75$ success without ever having trained on these configurations, ahead of WorldCoder by $15\text{-}30$ absolute points and ahead of all policy baselines by an order of magnitude. The library composes, since predicates promoted at \texttt{UnlockLocal} (e.g., \texttt{unlocked(?door)}, \texttt{is\_key\_for}) and operators learned at \texttt{Pickup}/\texttt{OpenRedDoor} re-bind to new objects and layouts in the held-out environments without re-synthesis.

On $\mathrm{env}_5$ (\texttt{MoveTwoAcrossS5N2}), every method drops below $.15$, including the Curator. The environment requires achieving a conjunction of binary inter-object spatial relations (\texttt{next\_to(o1, o2)} and \texttt{next\_to(o3, o4)} simultaneously), but every training-environment goal is unary or agent-indexed (\texttt{holding}, \texttt{unlocked}, \texttt{at-location}). The bottleneck is the LLM proposer's goal diversity, since even when it generates a candidate \texttt{next\_to(?o1, ?o2)} predicate during training, the witness goals $\mathcal{W}_t$ available on training environments never test compositional object-object relations, so the predicate fails the empowerment threshold and is not promoted. The criterion behaves correctly given the probes it sees; the probes simply do not cover the structurally distinct goals $\mathrm{env}_5$ demands. We read this as a limitation of the proposer's goal-generation diversity, not of the empowerment criterion itself.

On both held-out Zelda environments, the Curator reaches ${\sim}.48$ success and outperforms WorldCoder by $8\text{-}12$ absolute points; policy baselines (PPO, Motif) remain near floor, and Hoarder collapses to ${\sim}.06$. The smaller margin reflects that Zelda's training-and-held-out gap is narrower (layout differences with the same primitive dynamics), so WorldCoder's mission-keyed reward functions transfer better than they do on BabyAI's compositionally distinct held-out environments.

Hoarder, which keeps every LLM-proposed element rather than curating, drops to ${\sim}.03\text{-}.06$ on every held-out environment in both families. The Curator's held-out gains therefore follow from the empowerment-based selection that keeps $\Lib$ compact and broadly applicable, rather than from the proposer generating useful candidates.

\begin{figure}[t]
    \centering
    \includegraphics[width=\textwidth]{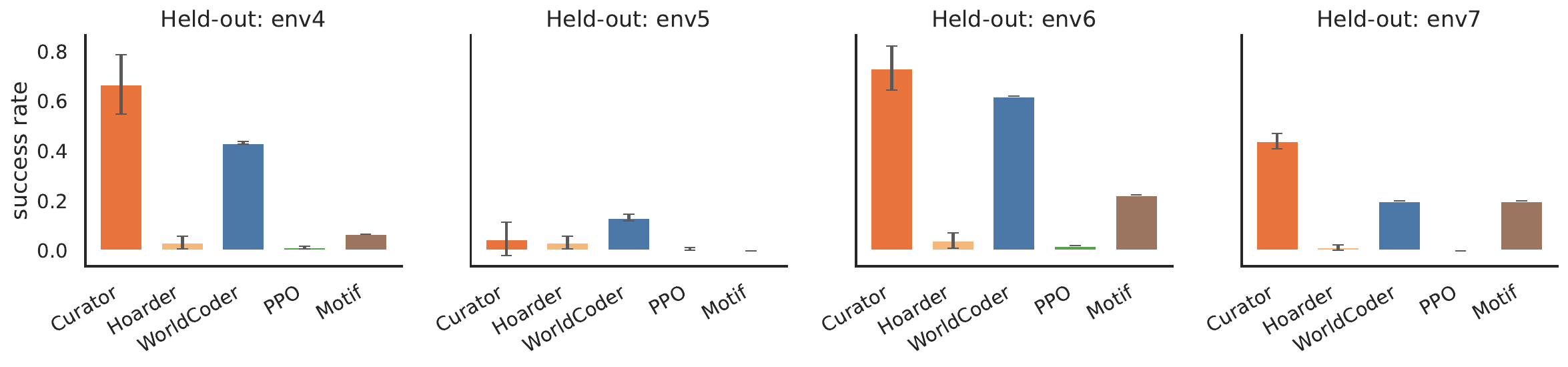}
    \caption{\textbf{BabyAI held-out transfer.} Zero-shot $\mathrm{reward\_success}$ of the frozen library $\Lib$ on four held-out environments, with error bars giving the standard deviation over 3 seeds.}
    \label{fig:curator-heldout-babyai}
\end{figure}

\begin{figure}[t]
    \centering
    \includegraphics[width=0.6\textwidth]{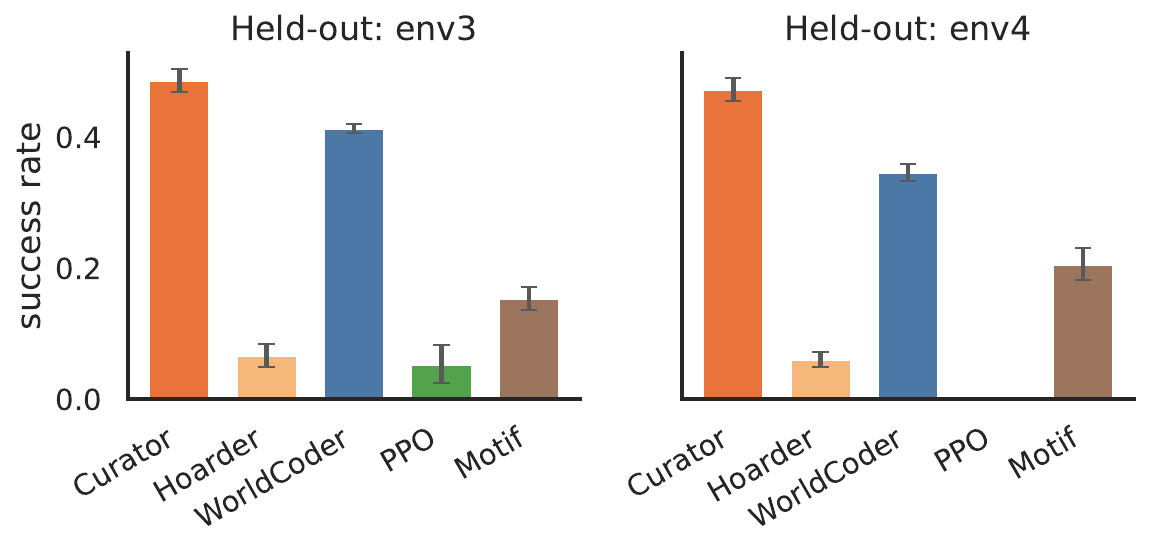}
    \caption{\textbf{Zelda held-out transfer.} Same protocol as \cref{fig:curator-heldout-babyai}, on the two held-out Zelda environments.}
    \label{fig:curator-heldout-zelda}
\end{figure}

\subsection{Library compactness and planning load}\label{app:curator-compactness}

The main-text \cref{tab:llm-combined} reports the curriculum-averaged library footprint behind the bloat claim in \cref{sec:llm}. This subsection provides the per-environment decomposition and the methodological details of the measurement.
We measure the carried-forward representation in characters of source code, broken into its two natural components on each side. WorldCoder carries a world model (Python transition function) plus a reward model containing one Python reward function per mission seen so far. The Curator carries the final PDDL domain file the planner compiles against, plus the \emph{grounding code} attached to each promoted predicate. 

\textbf{Per-environment breakdown.}
\cref{tab:bloat-summary} reports both decompositions per environment, mean over seeds. Two patterns are consistent across the families.

\emph{WorldCoder.} Its reward model grows monotonically across the four training environments, taking the reward-side character count from $6.2$\,KB to $20.4$\,KB ($3.3\times$). The world-model side stays around $5$\,KB throughout, since the transition function is edited in place rather than appended to. Total carried representation grows from $11.2$ to $24.7$\,KB.

\emph{Curator.} Its carried representation stays small and roughly flat. Across the EMPA/Zelda environments the symbolic side is $2$-$3$\,KB and the grounding code is $2$-$4$\,KB, for a total of $2$-$6$\,KB per environment---about $4$-$10\times$ smaller than WorldCoder's BabyAI reward bundle and approximately constant in the curriculum step. The trusted-element counts behind those byte numbers are $5$-$6$ predicates and $0$-$3$ operator schemas per environment.

\begin{table}[t]
    \centering
    \scriptsize
    \caption{\textbf{Carried-forward representation size across training environments.}
        Representation size is measured as source-code characters retained after each environment. WorldCoder size includes its generated world-model and reward code; Curator size includes the PDDL domain and predicate-grounding code used by the planner. Lower totals indicate a more compact carried-forward representation.
        }
    \label{tab:bloat-summary}
    \begin{tabular}{llrrrrrrrrr}
        \toprule
        & & \multicolumn{3}{c}{WorldCoder} & & \multicolumn{3}{c}{Curator} \\
        \cmidrule(lr){3-5}\cmidrule(lr){7-9}
        Family & Environment & WM.py & rwd.py & total & & PDDL & grnd.py & total\\
        \midrule
        BabyAI & env0 (GoToLocal)               & 4\,982 &  6\,207 & 11\,189 & & 2\,811 & 3\,883 & 6\,694\\
        BabyAI & env1 (Pickup)     & 4\,449 & 12\,305 & 16\,754 & & 2\,811 & 3\,883 & 6\,694\\
        BabyAI & env2 (OpenRedDoor)             & 5\,218 & 15\,443 & 20\,661 & & 5\,475 & 4\,091 & 9\,566\\
        BabyAI & env3 (UnlockLocal)             & 4\,337 & 20\,360 & 24\,697 & & 5\,141 & 4\,091 & 9\,232\\
        \midrule
        Zelda  & env0 (\texttt{empa\_250})      & 2\,799 & 1\,169 &  3\,969 & & 2\,173 & 2\,109 & 4\,283\\
        Zelda  & env1 (\texttt{empa\_251})      & 5\,906 & 2\,686 &  8\,592 & & 2\,275 & 2\,226 & 4\,501\\
        Zelda  & env2 (\texttt{empa\_252})      & 4\,986 & 3\,871 &  8\,857 & & 2\,572 & 1\,893 & 4\,465\\
        \bottomrule
    \end{tabular}
\end{table}

\end{document}